\documentclass{article}

\usepackage{microtype}
\usepackage{url}
\usepackage{multirow}
\usepackage{multicol}
\usepackage{booktabs}
\usepackage{pifont}
\usepackage{graphicx}
\usepackage{subcaption}
\usepackage{wrapfig}

\usepackage{xcolor,colortbl}
\definecolor{Gray}{gray}{0.9}

\usepackage{hyperref}

\usepackage[accepted]{icml2025}

\usepackage{amsmath}
\usepackage{amssymb}
\usepackage{mathtools}
\usepackage{amsthm}

\usepackage[capitalize,noabbrev]{cleveref}

\theoremstyle{plain}
\newtheorem{theorem}{Theorem}[section]
\newtheorem{proposition}[theorem]{Proposition}

\theoremstyle{definition}
\newtheorem{definition}[theorem]{Definition}

\theoremstyle{remark}

\newtheorem*{theorem*}{Theorem}
\newtheorem*{proposition*}{Proposition}

\newcommand{\bfx}[0]{\mathbf{x}}
\newcommand{\bfy}[0]{\mathbf{y}}
\newcommand{\bfz}[0]{\mathbf{z}}
\newcommand{\bfv}[0]{\mathbf{v}}
\newcommand{\bfw}[0]{\mathbf{w}}
\newcommand{\bfm}[0]{\mathbf{m}}

\newcommand{\tf}[0]{\tilde{f}}

\DeclareMathOperator*{\argmin}{arg\,min}

\DeclareMathOperator{\sign}{sign}

\newcommand{\RETURN}{\STATE \textbf{return} }

\newcommand{\alglinelabel}{%
  \addtocounter{ALC@line}{-1}%
  \refstepcounter{ALC@line}%
  \label%
}

\icmltitlerunning{Low-distortion \emph{and} GPU-compatible Tree Embeddings in Hyperbolic Space}

\begin{document}

\twocolumn[
\icmltitle{Low-distortion \emph{and} GPU-compatible Tree Embeddings in Hyperbolic Space}

\icmlsetsymbol{equal}{*}

\begin{icmlauthorlist}
\icmlauthor{Max van Spengler}{vis}
\icmlauthor{Pascal Mettes}{vis}
\end{icmlauthorlist}

\icmlaffiliation{vis}{VIS Lab, University of Amsterdam, The Netherlands}

\icmlcorrespondingauthor{Max van Spengler}{m.w.f.vanspengler@uva.nl}

\icmlkeywords{Hyperbolic Geometry, Hyperbolic Tree Embeddings, Representation Learning, Hierarchical Learning}

\vskip 0.3in
]

\printAffiliationsAndNotice{}  %

\begin{abstract}
Embedding tree-like data, from hierarchies to ontologies and taxonomies, forms a well-studied problem for representing knowledge across many domains. Hyperbolic geometry provides a natural solution for embedding trees, with vastly superior performance over Euclidean embeddings. Recent literature has shown that hyperbolic tree embeddings can even be placed on top of neural networks for hierarchical knowledge integration in deep learning settings. For all applications, a faithful embedding of trees is needed, with combinatorial constructions emerging as the most effective direction. This paper identifies and solves two key limitations of existing works. First, the combinatorial construction hinges on finding highly separated points on a hypersphere, a notoriously difficult problem. Current approaches achieve poor separation, degrading the quality of the corresponding hyperbolic embedding. We propose highly separated Delaunay tree embeddings (HS-DTE), which integrates angular separation in a generalized formulation of Delaunay embeddings, leading to lower embedding distortion. Second, low-distortion requires additional precision. The current approach for increasing precision is to use multiple precision arithmetic, which renders the embeddings useless on GPUs in deep learning settings. We reformulate the combinatorial construction using floating point expansion arithmetic, leading to superior embedding quality while retaining utility on accelerated hardware.
\end{abstract}

\section{Introduction}
\label{sec:introduction}
\vspace{-0.1cm}
Tree-like structures such as hierarchies are key for knowledge representation, from biological taxonomies \citep{padial2010integrative} and phylogenetics \citep{kapli2020phylogenetic} to natural language \citep{miller1995wordnet, tifrea2018poincar, yang2016hierarchical}, social networks \citep{freeman2004development}, visual understanding \citep{desai2023hyperbolic} and more. To obtain faithful embeddings, Euclidean space is ill-equiped; even simple trees lead to high distortion \citep{sonthalia2020tree}. On the other hand, the exponential nature of hyperbolic space  makes it a natural geometry for embedding trees \citep{nickel2018learning}. This insight has led to rapid advances in hyperbolic learning, with superior embedding \citep{sala2018representation} and clustering \citep{chami2020trees} of tree-like data.

Recent literature has shown that hyperbolic tree embeddings are not only useful on their own, they also form powerful target embeddings on top of deep networks to unlock hierarchical representation learning \citep{peng2021hyperbolic,mettes2024hyperbolic}. Deep learning with hyperbolic tree embeddings has made it possible to effectively perform action recognition \citep{long2020searching}, knowledge graph completion \citep{kolyvakis2020hyperbolic}, hypernymy detection \citep{tifrea2018poincar} and many other tasks in hyperbolic space. These early adoptions of hyperbolic embeddings have shown a glimpse of the powerful improvements that hierarchically aligned representations can bring to deep learning.

The rapid advances in hyperbolic deep learning underline the need for hyperbolic tree embeddings compatible with GPU accelerated software. Current tree embedding algorithms can roughly be divided in two categories; optimization-based and constructive methods. The optimization-based methods, e.g. Poincaré embeddings \citep{nickel2017poincare}, hyperbolic entailment cones \citep{ganea2018hyperbolic}, and distortion optimization \citep{yu2022skin}, train embeddings using some objective function based on the tree. While these approaches are flexible due to minimal assumptions, the optimization can be unstable, slow and result in heavily distorted embeddings. Conversely, constructive methods traverse a tree once, placing the children of each node on a hypersphere around the node's embedding \citep{sarkar2011low,sala2018representation}. These methods are fast, require no hyperparameter tuning and have great error guarantees. However, they rely on hyperspherical separation, a notoriously difficult problem \citep{saff1997distributing}, and on multiple precision floating point arithmetic, which is incompatible with GPUs and other accelerated hardware. 

Our goal is to embed trees in hyperbolic space with minimal distortion yet with the ability to operate on accelerated GPU hardware even when using higher precision. We do so in two steps. First, we outline HS-DTE, a new generalization of Delaunay tree embeddings \citep{sarkar2011low} to arbitrary dimensionality through hyperspherical separation. Second, we propose HypFPE, a floating point expansion arithmetic approach to enhance our constructive hyperbolic tree embeddings. We develop new routines for computing hyperbolic distances on floating point expansions and outline how to use these on hyperbolic embeddings. Furthermore, we provide theoretical results demonstrating the effectiveness of these floating point expansion routines. Floating point expansions allow for higher precision similar to multiple precision arithmetic. However, our routines can be implemented using standard floating point operations, making these compatible with GPUs. Experiments demonstrate that HS-DTE generates higher fidelity embeddings than other hyperbolic tree embeddings and that HypFPE further increases the embedding quality for HS-DTE and other methods. We will make two software libraries available, one for arbitrary-dimensional hyperbolic tree embeddings and one for GPU-compatible floating point expansions.

\section{Preliminaries and related work}
\label{sec:related_work}

\subsection{Hyperbolic geometry preliminaries}
\vspace{-0.1cm}
To help explain existing constructive hyperbolic embedding algorithms and our proposed approach, we outline the most important hyperbolic functions here. For a more thorough overview, we refer to \citep{cannon1997hyperbolic,anderson2005hyperbolic}. Akin to \citep{nickel2017poincare,ganea2018hyperbolic,sala2018representation}, we focus on the Poincaré ball model of hyperbolic space. For $n$-dimensional hyperbolic space, the Poincaré ball model is defined as the Riemannian manifold $(\mathbb{D}^n, \mathfrak{g}^n)$, where the manifold and Riemannian metric are defined as
\begin{equation}
\begin{gathered}
    \mathbb{D}^n = \big\{ \bfx \in \mathbb{R}^n : ||\bfx||^2 < 1 \big\},\\
    \mathfrak{g}^n = \lambda_\bfx I_n, \quad \lambda_\bfx = \frac{2}{1 - ||x||^2}.
\end{gathered}
\end{equation}
Using this model of hyperbolic space, we can compute distances between $\bfx, \bfy \in \mathbb{D}^n$ either as
\begin{equation}\label{eq:poin_dist_acosh}
    d_{\mathbb{D}} (\bfx, \bfy) = \cosh^{-1} \bigg( 1 + 2 \frac{||\bfx - \bfy||^2}{(1 - ||\bfx||^2) (1 - ||\bfy||^2)} \bigg),
\end{equation}
or as
\begin{equation}\label{eq:poin_dist_atanh}
    d_{\mathbb{D}} (\bfx, \bfy) = 2 \tanh^{-1} \big( ||-\bfx \oplus \bfy|| \big),
\end{equation}
where
\begin{equation}\label{eq:mob_add}
    \bfx \oplus \bfy = \frac{(1 + 2 \langle \bfx, \bfy \rangle + ||\bfy||^2) \bfx + (1 - ||\bfx||^2) \bfy}{1 + 2 \langle \bfx, \bfy \rangle + ||\bfx||^2 ||\bfy||^2},
\end{equation}
is the Möbius addition operation. These formulations are theoretically equivalent, but suffer from different numerical errors. This distance represents the length of the straight line or geodesic between $\bfx$ and $\bfy$ with respect to the Riemannian metric $\mathfrak{g}^n$. Geodesics of the Poincaré ball are Euclidean straight lines through the origin and circular arcs perpendicular to the boundary of the ball.
We will use some isometries of hyperbolic space. More specifically, we will use reflections in geodesic hyperplanes. A geodesic hyperplane is an $(n-1)$-dimensional manifold consisting of all geodesics through some point $\bfx \in \mathbb{D}^n$ which are orthogonal to a normal geodesic through $\bfx$ or, equivalenty, orthogonal to some normal tangent vector $\bfv \in \mathcal{T}_\bfx \mathbb{D}^n$. For the Poincaré ball these are the Euclidean hyperplanes through the origin and the $(n-1)$-dimensional hyperspherical caps which are perpendicular to the boundary of the ball. We will denote a geodesic hyperplane by $H_{\bfx, \bfv}$. Reflection in a geodesic hyperplane $H_{\mathbf{0}, \bfv}$ through the origin can be defined as in Euclidean space, so as a Householder transformation
\begin{equation}
    R_{H_{\mathbf{0}, \bfv}}(\bfy) = (I_n - 2 \bfv \bfv^T) \bfy,
\end{equation}
where $||\bfv|| = 1$. Reflection in the other type of geodesic hyperplane is a spherical inversion:
\begin{equation}
    R_{H_{\bfx, \bfv}}(\bfy) = \bfm + \frac{r^2}{||\bfy - \bfm||^2} (\bfy - \bfm),
\end{equation}
with $\bfm \in \mathbb{R}^n$, $r > 0$ the center and radius of the hypersphere containing the geodesic hyperplane. We will denote a reflection mapping some point $\bfx \in \mathbb{D}^n$ to another point $\bfy \in \mathbb{D}^n$ by $R_{\bfx \rightarrow \bfy}$. The specific formulations and derivations of the reflections that we use are in Appendix \ref{sec:reflections}.

\vspace{-0.1cm}

\subsection{Related work}
\vspace{-0.15cm}
\paragraph{Hyperbolic tree embedding algorithms.}
Existing embedding methods can be divided into two categories: optimization-based methods and constructive methods. The optimization methods typically use the tree to define some loss function and use a stochastic optimization method such as SGD to directly optimize the embedding of each node, e.g. Poincaré embeddings \citep{nickel2017poincare}, hyperbolic entailment cones \citep{ganea2018hyperbolic} and distortion optimization \citep{sala2018representation,yu2022skin}. Poincaré embeddings use a contrastive loss where related nodes are pulled together and unrelated nodes are pushed apart. Hyperbolic entailment cones attach an outwards radiating cone to each node embedding and define a loss that forces children of nodes into the cone of their parent. Distortion optimization directly optimizes for a distortion loss to embed node pairs. Such approaches are flexible, but do not lead to arbitrarily low distortion and optimization is slow. Constructive methods are either combinatorial methods \citep{sarkar2011low,sala2018representation} or eigendecomposition methods \citep{sala2018representation}. Combinatorial methods first place the root of a tree at the origin of the hyperbolic space and then traverse down the tree, iteratively placing nodes as uniformly as possible on a hypersphere around their parent. \citep{sarkar2011low} proposes a 2-dimensional approach, where the points have to be separated on a circle; a trivial task. For higher dimensions, \citep{sala2018representation} place points on a hypercube inscribed within a hypersphere, which leads to suboptimal distribution. We also follow a constructive approach, where we use an optimization method for the hyperspherical separation, leading to significantly higher quality embeddings. The eigendecomposition method h-MDS \citep{sala2018representation} takes a graph or tree metric and uses an eigendecomposition of the corresponding distance matrix to generate low-distortion embeddings. However, it collapses nodes within some subtrees to a single point, leading to massive local distortion.

\vspace{-0.15cm}

\paragraph{Deep learning with hyperbolic tree embeddings.} In computer vision, a wide range of works have recently shown the potential and effectiveness of using a hyperbolic embedding space \citep{khrulkov2020hyperbolic}. Specifically, hierarchical prior knowledge can be embedded in hyperbolic space, after which visual representations can be mapped to the same space and optimized to match this hierarchical organization. \citep{long2020searching} show that such a setup improves hierarchical action recognition, while \citep{liu2020hyperbolic} use hierarchies with hyperbolic embeddings for zero-shot learning. Deep visual learning with hyperbolic tree embeddings has furthermore shown to improve image segmentation \citep{ghadimi2022hyperbolic}, skin lesion recognition \citep{yu2022skin}, video understanding \citep{li2024isolated}, hierarchical visual recognition \citep{ghadimi2021hyperbolic,dhall2020hierarchical}, hierarchical model interpretation \citep{gulshad2023hierarchical}, open set recognition \citep{dengxiong2023ancestor}, continual learning \citep{gao2023exploring}, few-shot learning \citep{zhang2022hyperbolic}, and more. Since such approaches require freedom in terms of embedding dimensionality, they commonly rely on optimization-based approaches to embed the prior tree-like knowledge. Similar approaches have also been investigated in other domains, from audio \citep{petermann2023hyperbolic} and text \citep{dhingra2018embedding,le2019inferring} to multimodal settings \citep{hong2023hyperbolic}. In this work, we provide a general-purpose and unconstrained approach for low-distortion embeddings with the option to scale to higher precisions without losing GPU-compatibility.

\vspace{-0.15cm}

\paragraph{Floating point expansions.}
Floating point expansions (FPEs) to increase precision in hyperbolic space was proposed by \citep{yu2021representing} and implemented in a PyTorch library \citep{yu2022mctensor}. However, their methodology is based on older FPE arithmetic definitions and routines by \citep{priest1991algorithms,priest1992properties,shewchuk1997adaptive}. In the field of FPEs, more efficient and stable formulations have been proposed over the years with improved error guarantees \mbox{\citep{joldes2014computation, joldes2015arithmetic, muller2016new}}. In this paper, we build upon the most recent arithmetic framework detailed in \citep{popescu2017towards}. We have implemented this framework for PyTorch and extend its functionality to work with hyperbolic embeddings.

\section{HS-DTE}%
\label{sec:method_embedding}
\vspace{-0.2cm}
\paragraph{Setting and objective.}
We are given a (possibly weighted) tree $T = (V, E)$, where the nodes in $V$ contain the concepts of our hierarchy and the edges in $E$ represent parent-child connections. The goal is to find an embedding $\phi: V \rightarrow \mathbb{D}^n$ that accurately captures the semantics of the tree $T$, so where $T$ can be accurately reconstructed from $\phi(V)$. An embedding $\phi$ is evaluated by first defining the graph metric $d_T (u, v)$ on the tree as the length of the shortest path between the nodes $u$ and $v$ and then checking how much $\phi$ distorts this metric. More specifically, for evaluation we use the average relative distortion \citep{sala2018representation}, the worst-case distortion \citep{sarkar2011low} and the mean average precision \citep{nickel2017poincare}. Further details on these metrics can be found in Appendix \ref{sec:metrics}.

\vspace{-0.2cm}

\paragraph{Constructive solution for hyperbolic embeddings.}
The starting point of our method is the Poincaré ball implementation of Sarkar's combinatorial construction \citep{sarkar2011low} as outlined by \citep{sala2018representation}. A generalized formulation of this approach is outlined in Algorithm \ref{alg:sarkar}. The scaling factor $\tau > 0$ is used to scale the tree metric $d_T$. A larger $\tau$ allows for a better use of the curvature of hyperbolic space, theoretically making it easier to find strong embeddings. Lower values can help avoid numerical issues that arise near the boundary of the Poincaré ball. When the dimension of the embedding space satisfies $n \leq \log (\text{deg}_{\max}) + 1$ and the scaling factor is set to
\begin{equation}\label{eq:tau_nd}
    \tau = \frac{1 + \epsilon} {\epsilon} \log \Big( 4 \; \text{deg}_{\max}{}^{\frac{1}{n-1}} \Big),
\end{equation}
with $\text{deg}_{\max}$ the maximal degree of $T$, then the construction leads to a worst-case distortion bounded by $1 + \epsilon$, given that the points on the hypersphere are sufficiently uniformly distributed \citep{sala2018representation}. When the dimension is $n > \log (\text{deg}_{\max}) + 1$, the scaling factor should be $\tau = \Omega(1)$, so it can no longer be reduced by choosing a higher dimensional embedding space \citep{sala2018representation}. The number of bits required for the construction is $\mathcal{O}(\frac{1}{\epsilon} \frac{\ell}{n} \log (\text{deg}_{\max}))$ when $n \leq \log (\text{deg}_{\max}) + 1$ and $\mathcal{O} (\frac{\ell}{\epsilon})$ when $n > \log (\text{deg}_{\max}) + 1$, where $\ell$ is the longest path in the tree.

\begin{algorithm}[h]
    \caption{Generalized Sarkar's Dalaunay tree embedding}\label{alg:sarkar}
    \begin{algorithmic}[1]
        \STATE \textbf{Input:} Tree $T = (V, E)$ and scaling factor $\tau > 0$.
        \FOR{$v \in V$}
            \STATE $p \gets \text{parent}(v)$
            \STATE $c_1, \ldots, c_{\text{deg}(v) - 1} \gets \text{children}(v)$
            \STATE Reflect $\phi(p)$ with $R_{\phi(v) \rightarrow \mathbf{0}}$ \alglinelabel{ln:constr_refl_parent}
            \STATE Generate $\bfx_1, \ldots, \bfx_{\text{deg}(v)}$ uniformly distributed points on a hypersphere with radius $1$ \alglinelabel{ln:hyperspherical_gen}
            \STATE Get rotation matrix $A$ such that $R_{\phi(v) \rightarrow \mathbf{0}} \big(\phi(p)\big)$ is aligned with $A \bfx_{\text{deg}(v)}$ and rotate
            \STATE Scale points by $\gamma = \frac{e^\tau - 1}{e^\tau + 1}$ \alglinelabel{ln:constr_scale}
            \STATE Reflect rotated and scaled points back: $\phi(c_i) \gets R_{\phi(v) \rightarrow \mathbf{0}} (\gamma A\bfx_i), \quad i = 1, \ldots, \text{deg}(v) - 1$ \alglinelabel{ln:constr_refl_children}
        \ENDFOR
    \end{algorithmic}
\end{algorithm}

\paragraph{The difficulty of distributing points on a hypersphere.}
The construction in Algorithm \ref{alg:sarkar} provides a nice way of constructing embeddings in $n$-dimensional hyperbolic space with arbitrarily low distortion. However, the bound on the distortion for the $\tau$ in Equation \ref{eq:tau_nd} is dependent on our ability to generate uniformly distributed points on the $n$-dimensional hypersphere. More specifically, given generated points $\bfx_1, \ldots, \bfx_{\text{deg}_{\max}}$, the error bound relies on the assumption that
\begin{equation}\label{eq:min_angle}
    \min_{i \neq j} \; \sin \angle(\bfx_i, \bfx_j) \geq \text{deg}_{\max}{}^{-\frac{1}{n-1}}.
\end{equation}
While this assumption can theoretically always be achieved \citep{sala2018representation}, generating points that actually satisfy it is not straightforward. In practice it is important to keep the scaling factor $\tau$ as small as possible, since the required number of bits increases linearly with $\tau$. Increasing the minimal angle beyond the condition in Equation \ref{eq:min_angle} allows for a smaller $\tau$. This problem of maximizing the minimal pairwise angle is commonly known as the Tammes problem \citep{tammes1930origin,mettes2019hyperspherical} and it has been studied extensively, with many analytical and approximate solutions given for various specific combinations of dimensions $n$ and number of points that have to be placed \citep{cohn2024spherical}. However, there exists no general closed-form solution that encompasses all possible combinations.

\citep{sala2018representation} propose to generate points by placing them on the vertices of an inscribed hypercube. This approach comes with three limitations. First, the maximum number of points that can be generated with this method is $2^n$, which is limited for small $n$. Second, for most configurations this method results in a sub-optimal distribution, leading to an unnecessarily high requirement on $\tau$. Third, this method depends on finding binary sequences of length $n$ with maximal Hamming distances (see Appendix \ref{sec:maximal_hamming_dists}), which is in general not an easy problem to solve. Their solution is to use the Hadamard code. This can only be used when the dimension is a power of 2 and at least $\text{deg}_{\max}$; a severe restriction, often incompatible with downstream tasks. 

\vspace{-0.2cm}

\paragraph{Delaunay tree embeddings with separation.}
We propose to improve the construction by distributing the points on the hypersphere in step \ref{ln:hyperspherical_gen} of Algorithm \ref{alg:sarkar} through optimization. Specifically, we use projected gradient descent to find $\bfx_1, \ldots, \bfx_k \in S^{n - 1}$ such that
\begin{equation}
    \bfx_1, \ldots, \bfx_k = \argmin_{\bfw_1, \ldots, \bfw_k \; \in \; S^{n - 1}} L (\bfw_1, \ldots, \bfw_k),
\end{equation}
where $L: (S^{n-1})^{k} \rightarrow \mathbb{R}$ is some objective function. Common choices for this objective are the hyperspherical energy functions \citep{liu2018learning}, given by
\begin{equation}
    E_{s} (\bfw_1, \dots, \bfw_k) = 
    \begin{cases}
        \sum\limits_{i = 1}^k \sum\limits_{j \neq i} ||\bfw_i - \bfw_j||^{-s}, & s > 0, \\
        \sum\limits_{i = 1}^k \sum\limits_{j \neq i} \log\big( ||\bfw_i - \bfw_j||^{-1} \big), & s = 0,
    \end{cases}
\end{equation}
where $s$ is a nonnegative integer parameterizing this set of functions. Minimizing these objective functions pushes the hyperspherical points apart, leading to a more uniform distribution. However, these objectives are aimed at finding a large mean pairwise angle, allowing for the possibility of having a small minimum pairwise angle. Having a small minimum pairwise angle leads to the corresponding nodes and their descendants being placed too close together, leading to large distortion, as shown in the experiments. Therefore, we advocate the minimal angle maximization (MAM) objective, aimed at maximizing this minimal angle
\begin{equation}
    E(\bfw_1, \ldots, \bfw_k) = - \sum_{i = 1}^k \min_{j \neq i} \angle(\bfw_i, \bfw_j),
\end{equation}
which pushes each $\bfw_i$ away from its nearest neighbour and essentially optimizes directly for the objective of the Tammes problem. We find that this method results in strong separation when compared to highly specialized existing methods used for specific cases of the Tammes problem \citep{cohn2024spherical}. More importantly, it leads to better separation than the method used in current hyperbolic embeddings \citep{sala2018representation}, allowing for the use of a smaller $\tau$. Moreover, this optimization method places no requirements on the dimension, making it a suitable choice for downstream tasks. We refer to the resulting construction as the highly separated Delaunay tree embedding (HS-DTE).

When performing the construction using MAM, the output of the optimization can be cached and reused each time a node with the same degree is encountered. Using this approach, the worst-case number of optimizations that has to be performed is $\mathcal{O}(\sqrt{N})$ as shown by Theorem \ref{thm:mhs_complexity}. 

\begin{theorem}\label{thm:mhs_complexity}
    The worst-case number of optimizations $p$ that has to be performed when embedding a tree with the combinatorial construction in Algorithm \ref{alg:sarkar} with any objective using caching is
    \begin{equation}
        p \leq \Big\lceil \frac{1}{2} (1 + \sqrt{16N - 15}) \Big\rceil.
    \end{equation}
\end{theorem}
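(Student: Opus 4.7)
The plan is to reduce the bound to an elementary counting argument on degree sequences of trees. Because the inner optimization at a node $v$ in Algorithm~\ref{alg:sarkar} is determined solely by the number of hyperspherical points it requests, namely $\text{deg}(v)$, any two nodes of equal degree produce the same cached solution. Hence, with caching, the number of optimizations $p$ equals the number of distinct values taken by $\text{deg}(v)$ over $v \in V$. Let $q$ denote that count, so $p \leq q$; it then suffices to show $q \leq \lceil \tfrac{1}{2}(1 + \sqrt{16N - 15}) \rceil$.

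The key observation is that the $q$ distinct degrees are $q$ distinct positive integers (in a tree on $N \geq 2$ nodes every degree is at least $1$), so when sorted in increasing order they are termwise at least $1, 2, \ldots, q$. Selecting a representative node for each distinct degree already accounts for a total degree contribution of at least
\begin{equation*}
    1 + 2 + \cdots + q \;=\; \frac{q(q+1)}{2},
\end{equation*}
and the remaining $N - q$ nodes contribute nonnegative additional degree. Therefore $\sum_{v \in V} \text{deg}(v) \geq \frac{q(q+1)}{2}$.

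Applying the handshake lemma to the tree $T$ yields $\sum_{v \in V} \text{deg}(v) = 2|E| = 2(N - 1)$, so $q(q+1) \leq 4(N - 1)$, i.e., $q^2 + q - 4(N - 1) \leq 0$. Solving this quadratic in $q$ gives $q \leq \tfrac{-1 + \sqrt{16N - 15}}{2}$, which is bounded above by $\tfrac{1 + \sqrt{16N - 15}}{2}$. Combining with $p \leq q$ and taking the ceiling produces the stated bound, with the $N = 1$ case being vacuous since no genuine optimization is required.

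The entire argument is essentially a handshake-lemma count, so I do not anticipate a substantive technical obstacle. The only place that deserves mild care is the opening reduction: one should check from Algorithm~\ref{alg:sarkar} that the cached computation really depends on $\text{deg}(v)$ alone and not on node-specific data, noting that the rotation $A$ and the reflections on lines~\ref{ln:constr_refl_parent} and \ref{ln:constr_refl_children} are applied \emph{after} the hyperspherical generation step on line~\ref{ln:hyperspherical_gen} and therefore do not invalidate reuse across nodes of matching degree.
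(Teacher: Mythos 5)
Your proof is correct and takes essentially the same route as the paper's: both rest on the handshake lemma $\sum_v \deg(v) = 2(N-1)$ together with the observation that the distinct degree values encountered (one per optimization, thanks to caching) must sum to at least $1+2+\cdots+p = p(p+1)/2$, and then solve the resulting quadratic. Your exposition is slightly more explicit about why the cached computation depends only on $\deg(v)$, which is a welcome clarification, but there is no substantive difference in the argument. One small remark: the paper's own proof actually derives the tighter form $p \leq \lceil \tfrac{1}{2}(\sqrt{16N-15}-1)\rceil$ (as you do before loosening), so the $+1$ inside the stated theorem appears to be a sign slip rather than the sharpest available constant; both your proof and the paper's establish the stronger version, which of course implies the stated one.
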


\vspace{-0.4cm}
\begin{proof}
    See Appendix \ref{sec:mhs_proof}.
\end{proof}
\vspace{-0.4cm}

In practice we find the number of optimizations to be lower due to frequent occurrence of low degree nodes for which cached points can be used, as shown in Appendix \ref{sec:tree_analysis}.

\textbf{MAM optimization details.} MAM is an easily optimizable objective, that we train using projected gradient descent for 450 iterations with a learning rate of 0.01, reduced by a factor of 10 every 150 steps, for every configuration. This optimization can generally be performed in mere seconds which, if necessary, can be further optimized through hyperparameter configurations, early stopping, parallelization or hardware acceleration. As a result, the increase in computation time of our method compared to \citep{sala2018representation} is minimal. Moreover, when compared to methods such as Poincaré embeddings \citep{nickel2017poincare} which use stochastic gradient descent to directly optimize the embeddings, we find that our method is orders of magnitude faster, while avoiding the need for costly hyperparameter tuning.

\section{HypFPE: High-precision GPU-compatible hyperbolic embeddings}
\label{sec:method_float_expansions}
While hyperbolic space enjoys numerous potential benefits, it is prone to numerical error when using floating point arithmetic. Especially as points move away from the origin, floating point arithmetic struggles to accurately represent or perform computations with these points. For larger values of $\tau$ or maximal path lengths $\ell$, the embeddings generated by the construction often end up in this problematic region of the Poincaré ball. As such, the precision required for hyperbolic embeddings is often larger than the precision provided by the floating point formats supported on GPUs. Increased precision can be attained by switching to arbitrary precision arithmetic. However, this makes the result incompatible with existing deep learning libraries.

Here, we propose HypFPE, a method to increase the precision of constructive hyperbolic approaches through floating point expansion (FPE) arithmetic. In this framework, numbers are represented as unevaluated sums of floating point numbers, typically of a fixed number of bits $b$. In other words, a number $f \in \mathbb{R}$ is represented by a floating point expansion $\tf$ as
\begin{equation}
    f \approx \tf = \sum_{i=1}^t \tf_i,
\end{equation}
where the $\tf_i$ are floating point numbers with a fixed number of bits and where $t$ is the number of terms that the floating point expansion $\tf$ consists of. Each term $\tf_i$ is in the form of a GPU supported float format, such as float16, float32 or float64. Moreover, to ensure that this representation is unique and uses bits efficiently, it is constrained to be ulp-nonoverlapping \citep{popescu2017towards}.
\begin{definition}
    A floating point expansion $\tf = \tf_1 + \ldots + \tf_t$ is ulp-nonoverlapping if for all $2 \leq i \leq t$, $|\tf_i| \leq \text{ulp}(\tf_{i - 1})$, where $\text{ulp}(\tf_{i - 1})$ is the \textit{unit in the last place} of $\tf_{i - 1}$.
\end{definition}
A ulp-nonoverlapping FPE consisting of $t$ terms each with $b$ bits precision has at worst $t(b-1) + 1$ bits of precision, since exactly $t - 1$ overlapping bits can occur.
The corresponding arithmetic requires completely different routines for computing basic operations, many of which have been introduced by \citep{joldes2014computation, joldes2015arithmetic, muller2016new, popescu2017towards}. An overview of these routines can be found in Appendix \ref{sec:fpe_arithmetic}. For an overview of the error guarantees we refer to \citep{popescu2017towards}. Each of these routines can be defined using ordinary floating point operations that exist for tensors in standard tensor libraries such as PyTorch, which are completely GPU compatible. Here, we have generalized all routines to tensor operations and implemented them in PyTorch by adding an extra dimension to each tensor containing the terms of the floating point expansion.

\vspace{-0.1cm}

\paragraph{Applying FPEs to the construction.}
In the constructive method the added precision is warranted whenever numerical errors lead to large deviations with respect to the hyperbolic metric. In other words, if we have some $\bfx \in \mathbb{D}^n$ and its floating point representation $\tilde{\bfx}$, then it makes sense to increase the precision if
\begin{equation}
    d_{\mathbb{D}} (\bfx, \tilde{\bfx}) \gg 0.
\end{equation}
For the Poincaré ball, this is the case whenever $\bfx$ lies somewhere close to the boundary of the ball. In our construction, this means that generation and rotation of points on the unit hypersphere can be performed in normal floating point arithmetic, since the representation error in terms of $d_{\mathbb{D}}$ will be negligible. However, for large $\tau$, the scaling of the hypersphere points and the hyperspherical inversion require increased precision as these map points close to the boundary of $\mathbb{D}^n$. Specifically, steps \ref{ln:constr_refl_parent}, \ref{ln:constr_scale} and \ref{ln:constr_refl_children} of Algorithm \ref{alg:sarkar} may require increased precision. Note that these operations can be performed using the basic operation routines shown in Appendix \ref{sec:fpe_arithmetic}.
From the basic operations, more complicated nonlinear operations can be defined through the Taylor series approximations that are typically used for floating point arithmetic. To compute the distortion of the resulting embeddings, the distances between the embedded nodes must be computed either through the inverse hyperbolic cosine formulation of Equation \ref{eq:poin_dist_acosh} or through the inverse hyperbolic tangent formulation of Equation \ref{eq:poin_dist_atanh}. We show how to accurately compute distances using either formulation.

\subsection{The inverse hyperbolic cosine formulation}
\vspace{-0.1cm}
For Equation \ref{eq:poin_dist_acosh}, normal floating point arithmetic may cause the denominator inside the argument of $\cosh^{-1}$ to become 0 due to rounding. To solve this, we can use FPE arithmetic to compute the argument of $\cosh^{-1}$ and then approximate the distance by applying $\cosh^{-1}$ to the largest magnitude term of the FPE. This allows accurate computation of distances even for points near the boundary of the Poincaré ball, as shown by Theorem \ref{thm:acosh_accuracy} and Proposition \ref{thm:acosh_range}. 

\begin{theorem}\label{thm:acosh_accuracy}
    Given $\bfx, \bfy \in \mathbb{D}^n$ with $||\bfx|| < 1 - \epsilon^{t - 1}$ and $||\bfy|| < 1 - \epsilon^{t - 1}$, an approximation $d$ to equation \ref{eq:poin_dist_acosh} can be computed with FPE representations with $t$ terms and with a largest magnitude approximation to $\cosh^{-1}$ such that, for some small $\epsilon^* > 0$,
    \begin{equation}
        \bigg|d - \cosh^{-1} \bigg( 1 + 2 \frac{||\bfx - \bfy||^2}{(1 - ||\bfx||^2) (1 - ||\bfy||^2)} \bigg)\bigg| < \epsilon^*.
    \end{equation}
\end{theorem}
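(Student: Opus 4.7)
The plan is to split the total error $|d - d_{\mathbb{D}}(\bfx, \bfy)|$ into two contributions: (i) the FPE-arithmetic error in computing the argument $A := 1 + 2||\bfx - \bfy||^2 / ((1-||\bfx||^2)(1-||\bfy||^2))$ as a $t$-term ulp-nonoverlapping expansion $\tilde{A} = \tilde{A}_1 + \ldots + \tilde{A}_t$, and (ii) the error incurred by applying $\cosh^{-1}$ only to the leading term $\tilde{A}_1$ rather than to the whole expansion. Each piece is bounded separately and combined via the triangle inequality.

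For (i), let $\epsilon$ denote the unit roundoff of the underlying $b$-bit format. The hypothesis $||\bfx||, ||\bfy|| < 1 - \epsilon^{t-1}$ is exactly the precision needed so that $1 - ||\bfx||^2$ and $1 - ||\bfy||^2$ do not round to zero when represented as $t$-term FPEs; in particular, the division in $A$ is well-defined and its output is representable. I would then chain the per-operation relative-error bounds for the FPE sum, product, and division routines of \cite{popescu2017towards} (summarised in Appendix \ref{sec:fpe_arithmetic}) through the fixed sequence of operations defining $A$, obtaining $|\tilde{A} - A| \leq c_1 \epsilon^t A$ for an absolute constant $c_1$ depending only on the arity of those operations.

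For (ii), ulp-nonoverlappingness gives $|\tilde{A}_i| \leq \text{ulp}(\tilde{A}_{i-1})$, so the tail is bounded by a geometric-type sum $|\tilde{A}_1 - \tilde{A}| = \big|\sum_{i \geq 2} \tilde{A}_i\big| \leq 2\,\text{ulp}(\tilde{A}_1) \leq c_2 \epsilon |\tilde{A}_1|$. Combined with (i), this yields $|\tilde{A}_1 - A| \leq c \epsilon A$. Since $A \geq 1$, the mean-value theorem applied to $\cosh^{-1}$ on the segment between $\tilde{A}_1$ and $A$ gives $|\cosh^{-1}(\tilde{A}_1) - \cosh^{-1}(A)| \leq |\tilde{A}_1 - A| / \sqrt{\xi^2 - 1}$ for some intermediate $\xi$. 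Whenever $A$ is bounded away from $1$, which is precisely the near-boundary regime where FPEs matter, this factor is controlled and the combined bound gives a small $\epsilon^*$.

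The main obstacle is the degenerate regime $A \approx 1$, where $\bfx$ and $\bfy$ are extremely close and $1/\sqrt{\xi^2-1}$ blows up so the mean-value step becomes useless. Here I would argue directly via the local expansion $\cosh^{-1}(1+\delta) = \sqrt{2\delta}\,(1 + O(\delta))$: writing $A = 1 + \delta$ and $|\tilde{A}_1 - A| \leq \eta$, a short case split on whether $\eta$ is smaller or larger than $\delta$ shows the output error is at most $O(\sqrt{\eta}) = O(\sqrt{\epsilon A})$, still a small positive quantity. This yields the existence of some $\epsilon^* > 0$ uniformly across both regimes. The remaining work is routine bookkeeping: tracking multiplicative constants through the FPE routines and verifying that each intermediate expansion remains ulp-nonoverlapping, both of which follow from the tables in Appendix \ref{sec:fpe_arithmetic}.
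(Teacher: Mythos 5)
Your decomposition — bound the FPE error on the argument, bound the truncation to the leading term, then push the resulting $|\tilde A_1 - A| \le c\epsilon A$ through $\cosh^{-1}$ — matches the paper's overall structure (the paper obtains $c = 8$ from the same ulp-nonoverlapping bookkeeping you do). Where you genuinely diverge is in the final $\cosh^{-1}$ step. The paper treats the near-$1$ behaviour of $\cosh^{-1}$ as a black box: it posits an empirical constant $\epsilon_1^*$ such that an $\epsilon$-relative argument perturbation yields at most $\epsilon_1^*$ absolute output error on $[1, R]$ (quoting $\epsilon_1^* \approx 2.1 \times 10^{-8}$ for PyTorch float64), then applies this bound "repeatedly" to land at $16\epsilon_1^*$. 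You instead derive the bound analytically: mean-value theorem away from $z = 1$, and the local expansion $\cosh^{-1}(1+\delta) = \sqrt{2\delta}\,(1 + O(\delta))$ near $z = 1$ with a case split on whether the perturbation $\eta$ exceeds $\delta$, giving a worst-case output error $O(\sqrt{\epsilon})$. This is more self-contained and actually explains the size of the paper's $\epsilon_1^*$: the measured $2.1 \times 10^{-8}$ is essentially $\sqrt{\epsilon}$ for float64, which is precisely the square-root singularity your expansion makes explicit. Two small points to patch: when $\eta > \delta$ the truncated argument $\tilde A_1$ may fall below $1$ and needs clamping before $\cosh^{-1}$ can be applied (the $O(\sqrt{\eta})$ bound survives), and the library's own rounding error in evaluating $\cosh^{-1}$ at $\tilde A_1$ should be folded in alongside the argument-perturbation error — both you and the paper elide that contribution.
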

\vspace{-0.4cm}
\begin{proof}
    See Appendix \ref{sec:acosh_accuracy_proof}.
\end{proof}
\vspace{-0.2cm}
\begin{proposition}\label{thm:acosh_range}
    The range of the inverse hyperbolic tangent formulation increases linearly in the number of terms $t$ of the FPEs being used.
\end{proposition}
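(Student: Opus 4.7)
The plan is to read \emph{range} here as the largest hyperbolic distance value that the $\tanh^{-1}$ formulation in Equation~\ref{eq:poin_dist_atanh} can return accurately when inputs are $t$-term FPEs, and to show that this value grows linearly in $t$. The bottleneck is the argument of $\tanh^{-1}$, which lies in $[0,1)$ and diverges only as it approaches $1$; so I would reduce the question to determining how close to $1$ the FPE representation of $\|-\bfx \oplus \bfy\|$ is allowed to be while still being distinct from $1$.

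First, I would reuse the same representability bound that underlies Theorem~\ref{thm:acosh_accuracy}: a ulp-nonoverlapping FPE of $t$ terms of $b$-bit floats can represent a number in $[0,1)$ up to distance on the order of $\epsilon^{t-1}$ from $1$, where $\epsilon = 2^{-(b-1)}$. Consequently, the largest value of $\|-\bfx \oplus \bfy\|$ that the tanh formulation can faithfully process is of the form $1 - c\,\epsilon^{t-1}$, where $c > 0$ is a constant accounting for the FPE overhead incurred by Möbius addition and the norm routine.

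Second, I would substitute this bound into the distance formula and apply the elementary asymptotic $\tanh^{-1}(1 - x) = \tfrac{1}{2}\ln\!\bigl(\tfrac{2-x}{x}\bigr)$ as $x \to 0^+$. With $x = c\,\epsilon^{t-1}$ this yields
\[
d_{\max}(t) \;=\; 2\,\tanh^{-1}\!\bigl(1 - c\,\epsilon^{t-1}\bigr) \;=\; (t-1)\ln(\epsilon^{-1}) + O(1),
\]
which is linear in $t$. Equivalently, the FPE-representable region of the Poincaré ball is a Euclidean ball of radius $1 - c\,\epsilon^{t-1}$, and its hyperbolic diameter grows linearly in $t$; attainability follows by exhibiting antipodal points near this radius, whose Möbius difference saturates the bound up to FPE rounding.

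The main obstacle is verifying that the FPE implementations of $\oplus$, of the Euclidean norm, and of $\tanh^{-1}$ itself do not consume the extra $t-1$ digits of precision before the final evaluation, so that the representability bound is the dominant source of error rather than an artifact. I would address this by propagating the componentwise $O(\epsilon^{t})$ relative error guarantees of the basic FPE routines in Appendix~\ref{sec:fpe_arithmetic} through the composition, mirroring the strategy used in Theorem~\ref{thm:acosh_accuracy}. Since every intermediate step introduces an error negligible compared to the $\epsilon^{t-1}$ representability margin, the linear-in-$t$ scaling of $d_{\max}(t)$ survives the composition and the proposition follows.
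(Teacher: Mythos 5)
Your argument is correct and follows essentially the same approach as the paper: take a pair of points at the FPE-representability boundary $\|\bfx\| = \|\bfy\| = 1 - \epsilon^{t-1}$, plug into the distance formula, and invoke the logarithmic asymptotics of the inverse hyperbolic function near the edge of its domain to conclude the range grows as $(t-1)|\log\epsilon| + O(1)$. The only discrepancy is cosmetic: the proposition's wording says ``inverse hyperbolic tangent'' (which you faithfully follow), but the paper's actual proof in Appendix~\ref{sec:acosh_range_proof} substitutes into the $\cosh^{-1}$ formulation of Equation~\ref{eq:poin_dist_acosh} --- an apparent typo in the statement, and since the two formulations are equivalent, the computation is the same up to a change of variables.
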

\vspace{-0.4cm}
\begin{proof}
    See Appendix \ref{sec:acosh_range_proof}.
\end{proof}
\vspace{-0.2cm}
Theorem \ref{thm:acosh_accuracy} shows that we can accurately compute distances on a larger domain than with normal floating point arithmetic. Proposition \ref{thm:acosh_range} shows that the effective radius of the Poincaré ball in which we can represent points and compute distances increases linearly in the number of terms of our FPE expansions. Therefore, this effective radius increases linearly with the number of bits. The same holds for arbitrary precision floating point arithmetic, so FPE expansions require a similar number of bits for constructive methods as arbitrary precision floating point arithmetic.

\subsection{The inverse hyperbolic tangent formulation}
For Equation \ref{eq:poin_dist_atanh}, the difficulty lies in the computation of $\tanh^{-1}$. With normal floating point arithmetic, due to rounding errors, this function can only be evaluated on $[-1 + \epsilon, 1 - \epsilon]$, where $\epsilon$ is the machine precision. This severely limits the range of values, i.e., distances, that we can compute. Therefore, we need to be able to compute the inverse hyperbolic tangent with FPEs. Inspired by \citep{felker2024musl}, we propose a new routine for this computation, given in Algorithm \ref{alg:atanh} of Appendix \ref{sec:fpe_arithmetic}. Here, we approximate the logarithm in steps \ref{ln:log_line_1} and \ref{ln:log_line_2} as $\log(\tf) \approx \log(\tf_1)$, which is accurate enough for our purposes. This algorithm can be used to accurately approximate $\tanh^{-1}$ while extending the range linearly in the number of terms $t$ as shown by Theorem \ref{thm:atanh_accuracy} and Proposition \ref{thm:atanh_range}.

\begin{theorem}\label{thm:atanh_accuracy}
    Given a ulp-nonoverlapping FPE $x = \sum_{i=1}^t x_i \in [-1 + \epsilon^{t - 1}, 1 - \epsilon^{t - 1}]$ consisting of floating point numbers with a precision $b > t$, Algorithm \ref{alg:atanh} leads to an approximation $y$ of the inverse hyperbolic tangent of $x$ that, for small $\epsilon^* > 0$, satisfies
    \begin{equation}
        |y - \tanh^{-1} (x)| \leq \epsilon^*.
    \end{equation}
\end{theorem}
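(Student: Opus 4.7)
The plan is to use the identity $\tanh^{-1}(x) = \tfrac{1}{2}\log\!\left(\tfrac{1+x}{1-x}\right)$, which underlies Algorithm \ref{alg:atanh}, and to track the error contributed by each FPE operation, finishing with the single-term logarithm approximation. The overall strategy is thus: bound the error of $1 \pm x$, then of the quotient, then of $\log(\tf_1)$, and combine.

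First I would analyze the two subtractions/additions producing FPE representations of $1+x$ and $1-x$. Because $x$ is ulp-nonoverlapping with $t$ terms of precision $b$, it resolves real numbers up to a relative error of order $\epsilon^{b(t-1)}$. Crucially, the hypothesis $|x| \leq 1 - \epsilon^{t-1}$ ensures $|1 \pm x| \geq \epsilon^{t-1}$, so no catastrophic cancellation occurs inside a $t$-term FPE; the standard error bounds from \citep{popescu2017towards} then give $\widetilde{1 \pm x} = (1 \pm x)(1 + \delta_\pm)$ with $|\delta_\pm|$ negligibly small relative to our target accuracy.

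Next I would invoke the FPE division routine's error guarantee to conclude that the computed ratio $\tf$ satisfies $\tf = \tfrac{1+x}{1-x}(1+\eta)$ for another tiny relative error $\eta$. The critical step is then the approximation $\log(\tf) \approx \log(\tf_1)$ where $\tf_1$ is the leading term. Because $\tf$ is ulp-nonoverlapping, the remaining terms satisfy $\bigl|\tf - \tf_1\bigr| \leq \epsilon\,|\tf_1|$ (up to a benign constant), so
\begin{equation}
\bigl|\log(\tf) - \log(\tf_1)\bigr| = \left|\log\!\left(1 + \tfrac{\tf - \tf_1}{\tf_1}\right)\right| = O(\epsilon).
\end{equation}
Adding the standard floating point error from evaluating $\log(\tf_1)$ itself (also $O(\epsilon)$) and the $O(\epsilon^{b(t-1)})$ relative error carried into $\log$ from $\tf$, which transforms into an additive error of the same order, then multiplying by $\tfrac{1}{2}$ (which preserves absolute error), yields $|y - \tanh^{-1}(x)| \leq \epsilon^*$ for a suitably small $\epsilon^*$ depending only on $b$ and $t$.

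The main obstacle will be handling the regime $x \to \pm(1 - \epsilon^{t-1})$ where $(1+x)/(1-x)$ becomes very large and the logarithm has to be evaluated far from $1$. The saving grace is that $\log$ converts relative errors in its argument into absolute errors of the same order, so as long as the FPE arithmetic preserves small relative error in $\tf$, and the leading-term logarithm captures $\tf$ up to a factor $1+O(\epsilon)$, the absolute error in $y$ remains controlled. Proving this last point rigorously requires checking that the ulp-nonoverlapping property is maintained through each of the intermediate FPE operations used in Algorithm \ref{alg:atanh}; this is where I would invest the most care, appealing term-by-term to the error lemmas of \citep{popescu2017towards} and ensuring no step silently loses a digit of the expansion.
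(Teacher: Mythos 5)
Your proposal follows essentially the same strategy as the paper's proof: compute the argument of the logarithm with FPE arithmetic relying on the error bounds of \citep{popescu2017towards}, then use the ulp-nonoverlapping property ($|\tf_2| \leq \text{ulp}(\tf_1) = \epsilon|\tf_1|$) to show the leading-term approximation $\log(\tf)\approx\log(\tf_1)$ incurs only $O(\epsilon)$ absolute error, exploiting the fact that $\log$ converts relative errors into additive ones. Two small remarks: your relative-error exponent $\epsilon^{b(t-1)}$ is a typo---for machine precision $\epsilon \approx 2^{-(b-1)}$ a $t$-term ulp-nonoverlapping FPE gives relative error on the order of $\epsilon^{t}$, not $\epsilon^{b(t-1)}$---and the paper tracks the two branches of Algorithm \ref{alg:atanh} separately (the $|x|<0.5$ case is dispatched quickly, the $|x|\geq 0.5$ case uses $\log(1 + 2|x|/(1-|x|))$ rather than your $\log((1+x)/(1-x))$, a rearrangement chosen precisely to avoid the cancellation your analysis has to worry about), but these are cosmetic; the core argument is the same.
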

\vspace{-0.4cm}
\begin{proof}
    See Appendix \ref{sec:atanh_accuracy_proof}.
\end{proof}
\vspace{-0.2cm}

\begin{proposition}\label{thm:atanh_range}
    The range of algorithm \ref{alg:atanh} increases linearly in the number of terms $t$.
\end{proposition}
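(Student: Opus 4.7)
The plan is to chain the accuracy guarantee of Theorem \ref{thm:atanh_accuracy} with the closed-form asymptotics of $\tanh^{-1}$ near $\pm 1$. By that theorem, Algorithm \ref{alg:atanh} produces a reliable approximation whenever the input FPE $x = \sum_{i=1}^t x_i$ lies in the window $[-1 + \epsilon^{t-1},\, 1 - \epsilon^{t-1}]$, where $\epsilon$ is the single-term machine precision. The range of the algorithm is precisely the image of this admissible window under $\tanh^{-1}$, and by the oddness of $\tanh^{-1}$ it suffices to control the upper endpoint $\tanh^{-1}(1 - \epsilon^{t-1})$.

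First I would rewrite the inverse hyperbolic tangent through its logarithmic form $\tanh^{-1}(x) = \tfrac{1}{2}\log\tfrac{1+x}{1-x}$ and substitute $x = 1 - \epsilon^{t-1}$, obtaining
\[
\tanh^{-1}\bigl(1 - \epsilon^{t-1}\bigr) \;=\; \tfrac{1}{2}\log\bigl(2 - \epsilon^{t-1}\bigr) \;+\; \tfrac{t-1}{2}\log(1/\epsilon).
\]
The first summand is bounded by the constant $\tfrac{1}{2}\log 2$, whereas the second summand equals $\tfrac{t-1}{2}\log(1/\epsilon)$ exactly, which grows strictly linearly in $t$ since $\epsilon < 1$ is a fixed machine constant. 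Hence the maximal attainable output magnitude scales as $\Theta(t)$, and an identical computation applied near $x = -1 + \epsilon^{t-1}$ handles the negative side by symmetry.

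The main (and essentially only) obstacle is bookkeeping: one should confirm that the algorithm's output FPE can actually hold a value of magnitude $\Theta(t)$ without further truncation, and that the logarithm approximations $\log(\tf) \approx \log(\tf_1)$ used in steps \ref{ln:log_line_1} and \ref{ln:log_line_2} of Algorithm \ref{alg:atanh} do not silently cap the output before the admissible window is exhausted. Both are immediate: the exponent field of a floating point term has dynamic range vastly exceeding any linear-in-$t$ quantity, and the admissibility bound $\epsilon^{t-1}$ was itself derived from that logarithm approximation in Theorem \ref{thm:atanh_accuracy}, so consistency is built in. Combining the two endpoints then yields the claimed linear-in-$t$ scaling of the range.
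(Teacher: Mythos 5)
Your proof is correct and follows essentially the same route as the paper: evaluate $\tanh^{-1}$ at the boundary point $x = 1-\epsilon^{t-1}$ of the admissible window and read off the linear-in-$t$ growth from the $\log(\epsilon^{t-1})$ term. Your exact decomposition $\tanh^{-1}(1-\epsilon^{t-1}) = \tfrac{1}{2}\log(2-\epsilon^{t-1}) + \tfrac{t-1}{2}\log(1/\epsilon)$ is in fact a bit cleaner than the paper's chain of one-sided inequalities, since it delivers the matching upper and lower bounds in a single stroke.
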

\vspace{-0.4cm}
\begin{proof}
    See Appendix \ref{sec:atanh_range_proof}.
\end{proof}
\vspace{-0.2cm}
Based on these results, either formulation could be a good choice for computing distances with FPEs. In practice, we find that the $\tanh^{-1}$ formulation leads to larger numerical errors, which is likely due to catastrophic cancellation errors in the dot product that is performed in Equation \ref{eq:mob_add}. Therefore, we use the $\cosh^{-1}$ formulation in our experiments.

\section{Experiments}
\label{sec:experiments}
\vspace{-0.1cm}

\subsection{Ablations}
\vspace{-0.1cm}
\paragraph{Minimal angle maximization.}
To test how well the proposed methods for hyperspherical separation perform, we generate points $\bfw_{1}, \ldots, \bfw_k$ on an 8-dimensional hypersphere for various numbers of points $k$ and compute the minimal pairwise angle $\min_{i \neq j} \angle (\bfw_i, \bfw_j)$. We compare to the Hadamard generation method from \citep{sala2018representation} and the method that is used in their implementation, which precomputes 1000 points using the method from \citep{lovisolo2001uniform} and samples from these precomputed points. Note that a power of 2 is chosen for the dimension to be able to make a fair comparison to the Hadamard construction, since this method cannot be used otherwise. The results are shown in Figure \ref{fig:hyperspherical_separation}. These results show that our MAM indeed leads to high separation in terms of the minimal pairwise angle, that the precomputed approach leads to poor separation and that the Hadamard method only performs moderately well when the number of points required is close to the dimension of the space.

To verify that this minimal pairwise angle is important for the quality of the construction, we perform the construction on a binary tree with a depth of 8 edges using each of the hypersphere generation methods. The construction is performed in 10 dimensions except for the Hadamard method, since this cannot generate 10 dimensional points. Additional results for dimensions 4, 7 and 20 are shown in Appendix \ref{sec:bin_tree_dim_res}. Each method is applied using float32 representations and a scaling factor of $\tau = 1.33$. The results are shown in Table \ref{tab:hyperspherical_ablation}. These findings support our hypothesis that the minimal pairwise angle is important for generating high quality embeddings and that the MAM is an excellent objective function for performing the separation.

\begin{figure*}
    \centering
    \begin{minipage}[t]{0.67\textwidth}
        \begin{subfigure}[t]{0.49\textwidth}
            \centering
            \includegraphics[width=\textwidth]{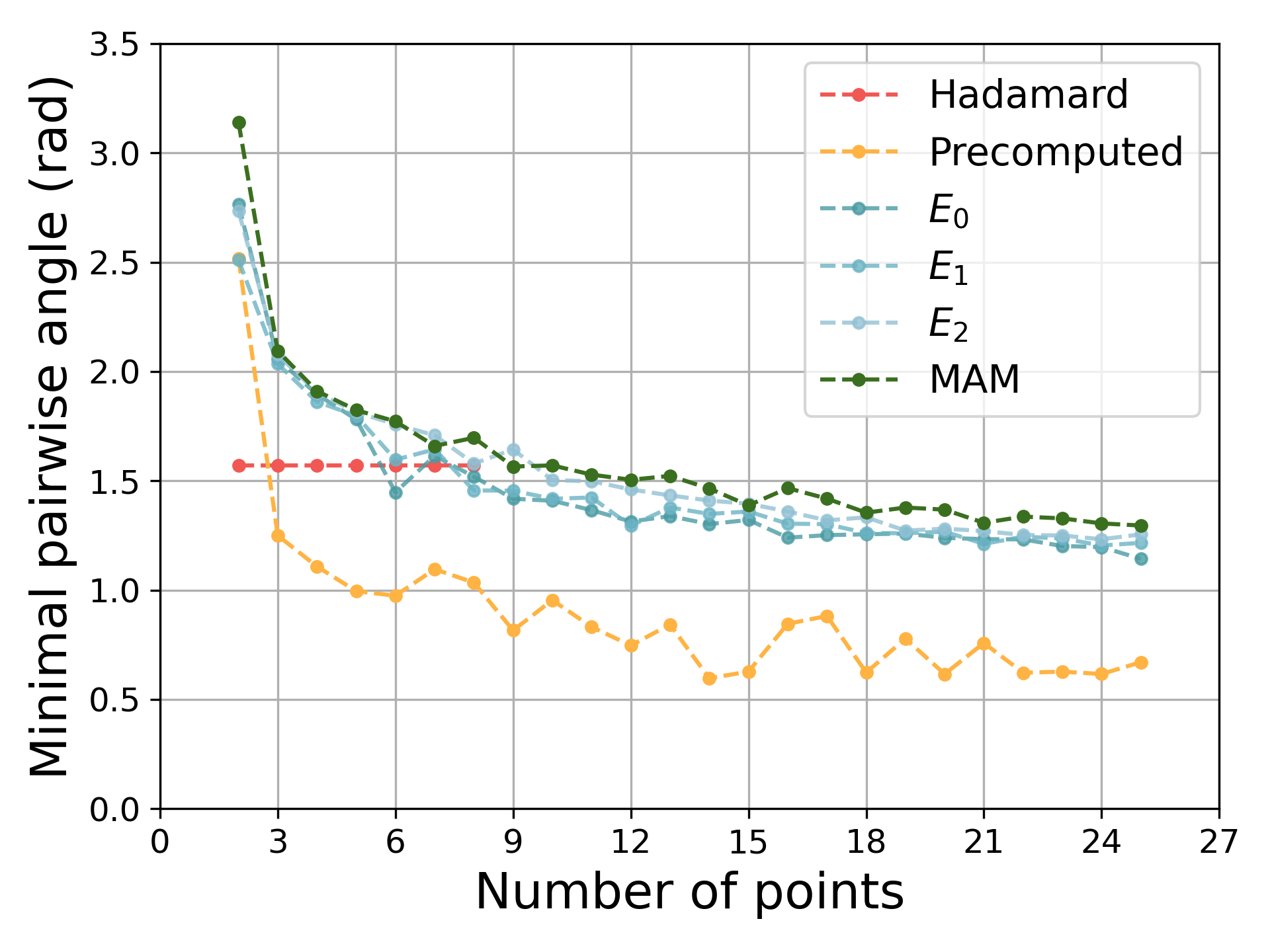}
            \caption{Minimal hyperspherical energy ablation.}
            \label{fig:hyperspherical_separation}
        \end{subfigure}
        \hfill
        \begin{subfigure}[t]{0.49\textwidth}
            \centering
            \includegraphics[width=\textwidth]{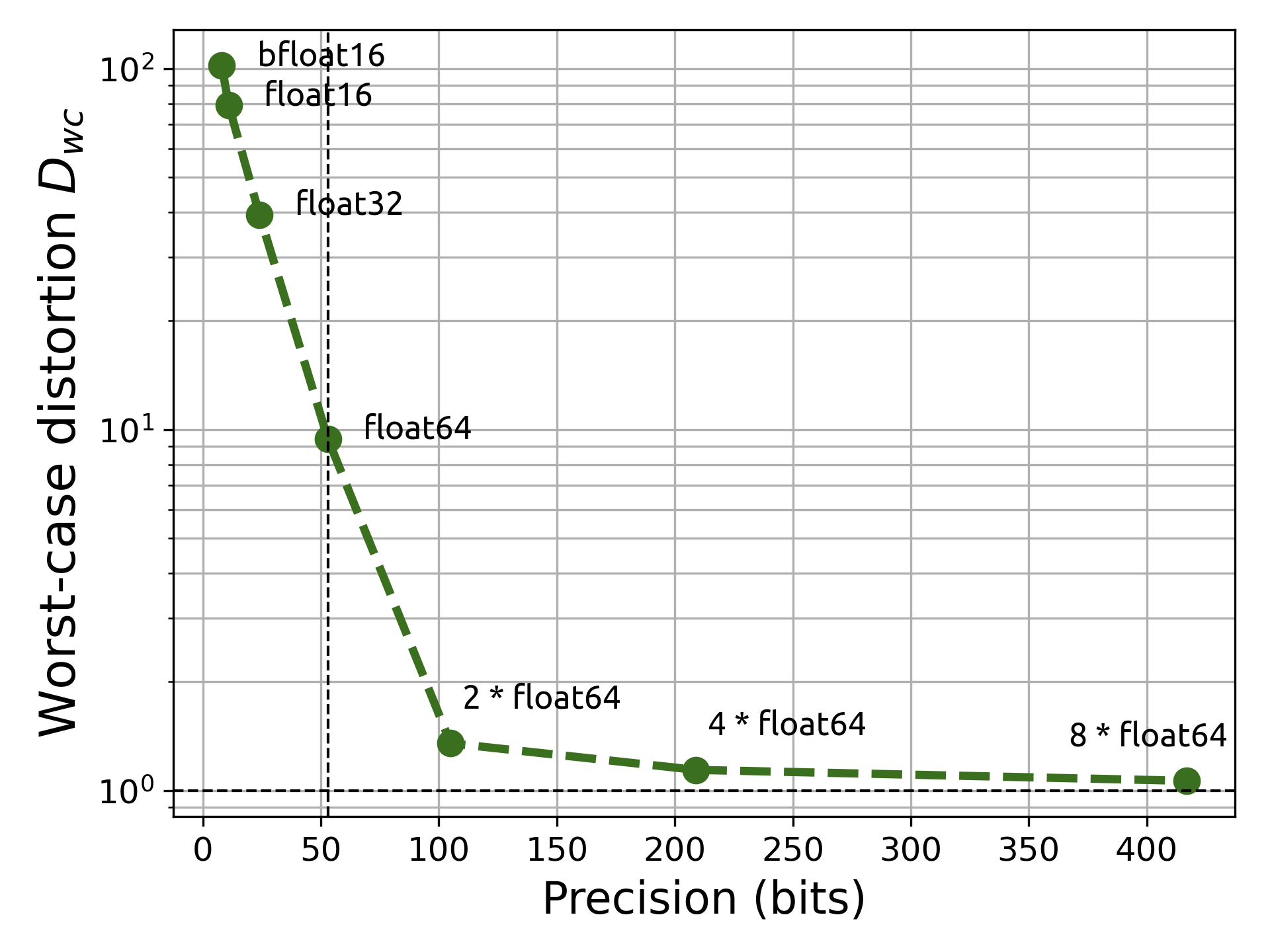}
            \caption{Floating point expansion ablation.}
            \label{fig:fpe_ablation}
        \end{subfigure}
        \caption{\textbf{Ablation studies on our construction and floating point expansion.} (a) Minimal pairwise angle ($\uparrow$) of the hyperspherical points generated in step \ref{ln:hyperspherical_gen} of Algorithm \ref{alg:sarkar} using the various generation methods. The dimension of the space is set to 8, so the Hadamard method cannot generate more than 8 points. The MAM objective consistently leads to a higher separation angle. (b) The worst-case distortion ($\downarrow$, $D_{wc}$) of the constructed embedding of the phylogenetic tree with the maximal admissable $\tau$ given the number of bits. The vertical dashed line shows the limit with standard GPU floating point formats (float64). The horizontal dashed line is the best possible result $D_{wc} = 1$. FPE representations are required to get high quality embeddings without losing GPU-compatibility.}
        \label{fig:ablations}
    \end{minipage}
    \hfill
    \begin{minipage}[h]{0.3\textwidth}
        \centering
        \includegraphics[width=0.925\textwidth]{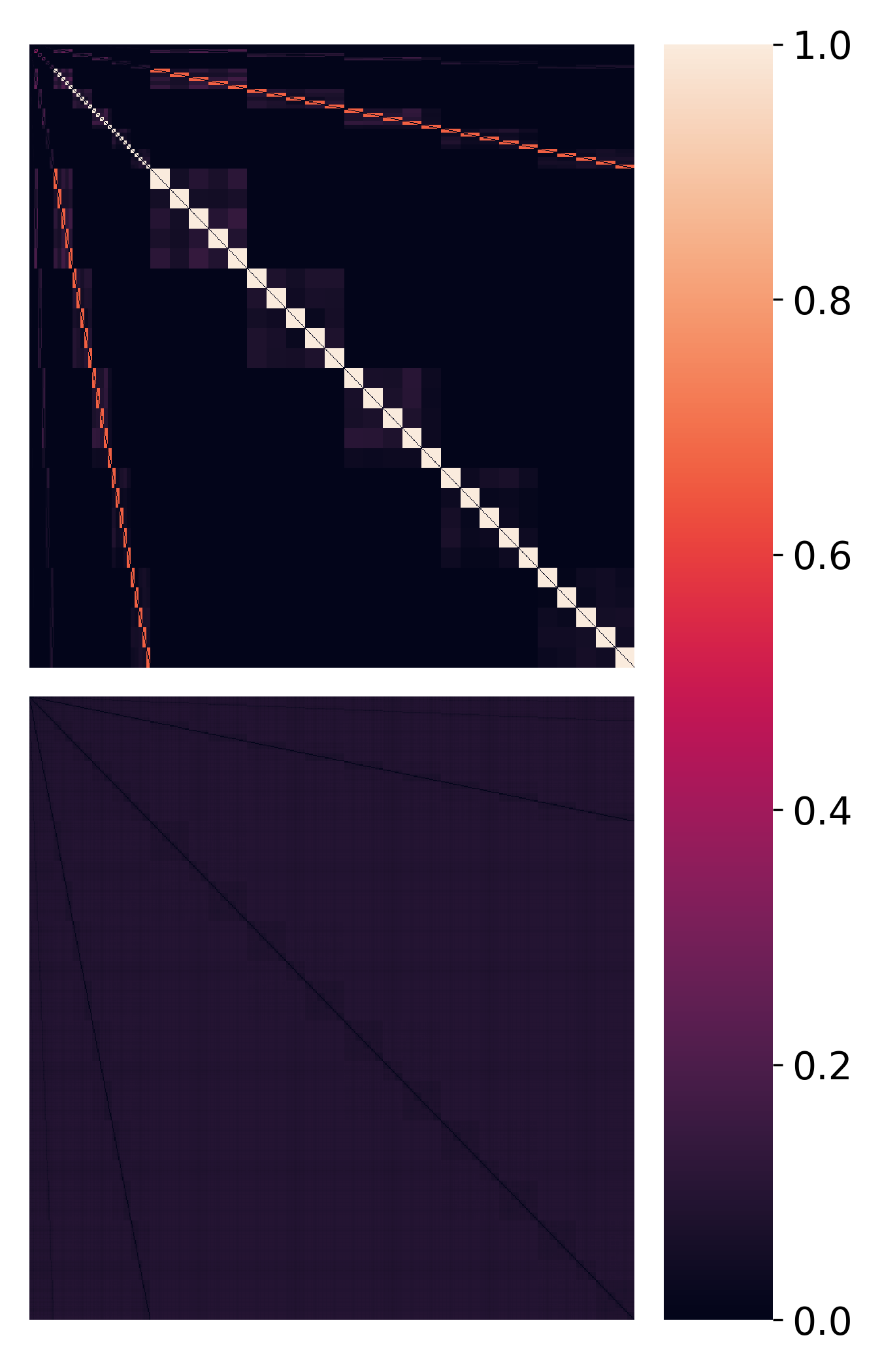}
        \vspace{-0.45cm}
        \captionof{figure}{\textbf{Pairwise relative distortions of h-MDS (top) and HS-DTE (bottom)} applied to the 5-ary tree with a scaling factor $\tau = 5.0$. Axes are ordered using a breadth-first search of the tree.}\label{fig:hmds_vs_ms_dte}
    \end{minipage}
    \vspace{-0.3cm}
\end{figure*}

\begin{table}
    \centering
    \resizebox{0.75\linewidth}{!}{
    \begin{tabular}{lcccc}
        \toprule
        Method & dim & $D_{ave}$ & $D_{wc}$ & MAP   \\
        \midrule
        \cite{sala2018representation} $\ddagger$          & 8   & 0.734      & 1143 & 0.154 \\
        \cite{sala2018representation} $\star$       & 10  & 0.361      & 18.42    & 0.998 \\
        $E_0$       & 10  & 0.219      & 1.670    & \textbf{1.000} \\
        $E_1$       & 10  & 0.204      & 1.686    & \textbf{1.000} \\
        $E_2$       & 10  & 0.190      & 1.642    & \textbf{1.000} \\
        MAM               & 10  & \textbf{0.188}      & \textbf{1.635}     & \textbf{1.000}     \\
        \bottomrule
    \end{tabular}
    }
    \caption{\textbf{Comparing hyperspherical separation} methods for the constructive hyperbolic embedding of a binary tree with a depth of 8 edges when using float32 representations (24 bits precision) in 10 dimensions. Note that the Hadamard method ($\ddagger$) cannot be applied in 10 dimensions, so there 8 is used instead.}
    \label{tab:hyperspherical_ablation}
    \vspace{-0.4cm}
\end{table}

\vspace{-0.1cm}

\paragraph{FPEs versus standard floating points.}
To demonstrate the importance of using FPEs for increasing precision, we perform the construction on a phylogenetic tree expressing the genetic heritage of mosses in urban environments \citep{hofbauer2016preliminary}, made available by \mbox{\citep{sanderson1994treebase}}, using various precisions. This tree has a maximum path length $\ell = 30$, which imposes sharp restrictions on the value of $\tau$ that we can choose before encountering numerical errors. We perform the construction either with normal floating point arithmetic using the usual GPU-supported float formats or with FPEs, using multiple float64 terms. The scaling factor $\tau$ is chosen to be close to the threshold where numerical problems appear in order to obtain optimal results for the given precision.
The results in terms of $D_{wc}$ are shown in Figure \ref{fig:fpe_ablation}. As can be seen from these results, around 100 bits of precision are needed to obtain decent results, which can be achieved using FPEs with 2 float64 terms. Without FPE expansions, the largest GPU-compatible precision is 53 bits, obtained by using float64. This precision yields a $D_{wc}$ of 9.42, which is quite poor. These results illustrate the importance of FPEs for high quality GPU-compatible embeddings.
\vspace{-0.1cm}

\subsection{Embedding complete \texorpdfstring{$m$}{m}-ary trees}
\vspace{-0.2cm}
To demonstrate the strong performance of the combinatorial constructions compared to other methods, we perform embeddings on several complete $m$-ary trees with a max path length of $\ell = 8$ and branching factors $m = 3, 5, 7$. Due to the small $\ell$, each experiment can be performed with normal floating point arithmetic using float64 representations. We compare our method with Poincaré embeddings (PE) \citep{nickel2017poincare}, hyperbolic entailment cones (HEC) \citep{ganea2018hyperbolic}, distortion optimization (DO) \citep{sala2018representation,yu2022skin}, h-MDS \citep{sala2018representation} and the combinatorial method with Hadamard \citep{sala2018representation} or precomputed hyperspherical points \citep{lovisolo2001uniform}. For the constructive methods and for h-MDS, a larger scaling factor improves performance, so we use $\tau = 5$. For DO we find that increasing the scaling factor does not improve performance, so we use $\tau = 1.0$. PE and HEC are independent of the scaling factor.

\begin{table*}[t]
    \centering
    \resizebox{0.7\linewidth}{!}{
    \begin{tabular}{l @{\hskip 0.5cm} ccc @{\hskip 0.5cm} ccc @{\hskip 0.5cm} ccc}
        \toprule
         & \multicolumn{3}{c}{\hspace{-0.6cm} 3-tree} & \multicolumn{3}{c}{\hspace{-0.6cm} 5-tree} & \multicolumn{3}{c}{\hspace{-0.35cm} 7-tree} \\
        & $D_{ave}$ &  $D_{wc}$ & MAP & $D_{ave}$ &  $D_{wc}$ & MAP & $D_{ave}$ &  $D_{wc}$ & MAP \\
        \midrule
        \cite{nickel2017poincare} & 0.17 & 169 & 0.8 & 0.31 & NaN & 0.58 & 0.84 & NaN & 0.24\\
        \cite{ganea2018hyperbolic} & 0.51 & 184 & 0.27 & 0.81 & 604 & 0.24 & 0.96 & 788 & 0.15\\
        \cite{yu2022skin} & 0.16 & 31.9 & 0.57 & 0.52 & 545 & 0.30 & 0.93 & 3230 & 0.05 \\
        \midrule
        \cite{sala2018representation} $\dagger$ & \textbf{0.03} & NaN & 0.52 & \textbf{0.04} & NaN & 0.1 & \textbf{0.03} & NaN & 0.05 \\
        \midrule
        \cite{sala2018representation} $\ddagger$ & 0.11 & 1.14 & \textbf{1.00} & 0.12 & 1.14 & \textbf{1.00} & 0.12 & 1.14 & \textbf{1.00}\\
        \cite{sala2018representation} $\star$ & 0.09 & 1.18 & \textbf{1.00} & 0.13 & 1.30 & \textbf{1.00} & 0.13 & 1.31 & \textbf{1.00}\\
        \rowcolor{Gray}
        HS-DTE & 0.06 & \textbf{1.07} & \textbf{1.00} & 0.09 & \textbf{1.09} & \textbf{1.00} & 0.10 & \textbf{1.12} & \textbf{1.00}\\
        \bottomrule
    \end{tabular}
    }
    \vspace{-0.1cm}
    \caption{\textbf{Comparison of hyperbolic embedding algorithms on \texorpdfstring{$m$}{m}-ary trees} with a maximum path length of $\ell = 8$. The h-MDS method is represented by $\dagger$. The $\ddagger$ method is the combinatorial construction with the hyperspherical points being generated using the Hadamard construction, whereas the $\star$ method samples hyperspherical points from the precomputed points generated with the hyperspherical separation method from \citep{lovisolo2001uniform}. The h-MDS method outperforms the other methods in terms of $D_{ave}$, but collapses nodes, leading to NaN values of the $D_{wc}$ and making the embeddings unusable. HS-DTE has the second best $D_{ave}$ and outperforms all methods in terms of $D_{wc}$. Each combinatorial construction has a perfect MAP.}
    \label{tab:n_h_tree_comp}
    \vspace{-0.15cm}
\end{table*}

The results on the various trees in 10 dimensions are shown in table \ref{tab:n_h_tree_comp} and additional results for dimensions 4, 7 and 20 are shown in Appendix \ref{sec:m_tree_dim_res}. These illustrate the strength of the combinatorial constructions. The optimization methods PE, HEC and DO perform relatively poor for all evaluation metrics. This performance could be increased through hyperparameter tuning and longer training. However, the results will not come close to those of the other methods. The h-MDS method performs well in terms of $D_{ave}$, but very poorly on $D_{wc}$ and MAP. This is because h-MDS collapses leaf nodes, leading to massive local distortion within the affected subtrees. However, between subtrees this distortion is much smaller, explaining the low $D_{ave}$. Figure \ref{fig:hmds_vs_ms_dte} illustrates the issue with h-MDS and the superiority of our approach.
Each of the white squares in the h-MDS plot corresponds to a collapsed subtree, which renders the embeddings unusable for downstream tasks since nearby leaf nodes cannot be distinguished.
We conclude that HS-DTE obtains the strongest embeddings overall.

\vspace{-0.2cm}

\begin{table*}[t]
    \centering
    \resizebox{0.8\linewidth}{!}{
    \begin{tabular}{l @{\hskip 0.5cm} c @{\hskip 0.5cm} cc @{\hskip 0.5cm} cc @{\hskip 0.5cm} cc @{\hskip 0.5cm} cc}
        \toprule
        & Precision & \multicolumn{2}{c}{\hspace{-0.7cm} Mosses} & \multicolumn{2}{c}{\hspace{-0.65cm} Weevils} & \multicolumn{2}{c}{\hspace{-0.6cm} Carnivora} & \multicolumn{2}{c}{\hspace{-0.4cm} Lichen} \\
        & bits & $D_{ave}$ &  $D_{wc}$ & $D_{ave}$ &  $D_{wc}$ & $D_{ave}$ &  $D_{wc}$ & $D_{ave}$ &  $D_{wc}$ \\
        \midrule
        \cite{nickel2017poincare} & 53 & 0.68 & 44350 & 0.45 & NaN & 0.96 & NaN & 151 & NaN \\
        \cite{ganea2018hyperbolic} & 53 & 0.90 & 1687 & 0.77 & 566 & 0.99 & NaN & 162 & NaN \\
        \cite{yu2022skin} & 53 & 0.83 & 163 & 0.57 & 79.8 & 0.99 & NaN & - & - \\
        \midrule
        \cite{sala2018representation} $\dagger$ & 53 & \underline{0.04} & NaN & \underline{0.06} & NaN & \underline{0.11} & NaN & \underline{0.13} & NaN \\
        \midrule
        \cite{sala2018representation} $\ddagger$ & 53 & - & - & 0.79 & 330 & 0.26 & 35.2 & 0.49 & 79.6 \\
        \cite{sala2018representation} $\star$ & 53 & 0.78 & 122 & 0.54 & 34.3 & 0.23 & 18.8 & 0.55 & 101 \\
        \rowcolor{Gray}
        HS-DTE & 53 & 0.40 & \underline{9.42} & 0.27 & \underline{2.03} & 0.12 & \underline{11.7} & 0.30 & \underline{23.5} \\
        \midrule
        HypFPE + \cite{sala2018representation} $\ddagger$ & 417 & - & - & 0.07 & 1.09 & 0.05 & 6.76 & 0.12 & 43.4 \\
        HypFPE + \cite{sala2018representation} $\star$ & 417 & 0.08 & 1.14 & 0.05 & 1.11 & \textbf{0.03} & 4.87 & 0.11 & 6.42 \\
        \rowcolor{Gray}
        HypFPE + HS-DTE & 417 & \textbf{0.04} & \textbf{1.06} & \textbf{0.03} & \textbf{1.04} & \textbf{0.03} & \textbf{2.03} & \textbf{0.05} & \textbf{3.30} \\
        \bottomrule
    \end{tabular}
    }%
    \vspace{-0.1cm}
    \caption{\textbf{Comparison of hyperbolic embedding algorithms on various trees.} $\dagger$ represents h-MDS, $\ddagger$ the construction with Hadamard hyperspherical points and $\star$ the construction with points sampled from a set precomputed with \citep{lovisolo2001uniform}. The best float64 performance is underlined and the best FPE performance is in bold. All embeddings are performed in a 10-dimensional space. Hadamard generation cannot be used for the mosses tree, since it has a $\deg_{max}$ greater than 8. Distortion optimization \citep{yu2022skin} does not converge for the lichen tree due to large variation in edge weights. Overall, combining HypFPE and HS-DTE works best.}
    \label{tab:tree_embeddings}
    \vspace{-0.4cm}
\end{table*}

\subsection{Embedding phylogenetic trees}
\vspace{-0.2cm}
Lastly, we compare hyperbolic embeddings on phylogenetic trees. Moreover, we show how adding HypFPE to our method and the other combinatorial methods increases the embedding quality when requiring GPU-compatibility. The phylogenetic trees describe mosses \citep{hofbauer2016preliminary}, weevils \citep{marvaldi2002molecular}, the European carnivora \citep{roquet2014one}, and lichen \citep{zhao2016towards}, obtained from \citep{mctavish2015phylesystem}. The latter two trees are weighted trees which can be embedded by adjusting the scaling in step \ref{ln:constr_scale} of Algorithm \ref{alg:sarkar}. Each of the embeddings is performed in 10-dimensional space. Other dimensions are given in Appendix \ref{sec:phylo_tree_dim_res}. The h-MDS method and the combinatorial constructions are performed with the largest $\tau$ that can be used with the given precision. 
The results are shown in Table \ref{tab:tree_embeddings}. When using float64, HS-DTE outperforms each of the optimization-based methods and the other combinatorial approaches from \citep{sala2018representation}. While h-MDS obtains high average distortion, it collapses entire subtrees, leading to massive local distortion. Therefore, the HS-DTE embeddings are of the highest quality. When adding HypFPE on top of the combinatorial approaches, all performances go up, with the combination of HS-DTE and HypFPE leading to the best performance on both $D_{ave}$ and $D_{wc}$. Additional results on graph-like data are shown in Appendix \ref{sec:graphs}.

\section{Conclusion}
\label{sec:conclusion}
\vspace{-0.15cm}
In this paper we introduce HS-DTE, a novel way of constructively embedding trees in hyperbolic space, which uses an optimization approach to effectively separate points on a hypersphere. Empirically, we show that HS-DTE outperforms existing methods, while maintaining the computational efficiency of the combinatorial approaches. We also introduce HypFPE, a framework for floating point expansion arithmetic on tensors, which is adapted to extend the effective radius of the Poincaré ball. This framework can be used to increase the precision of computations, while benefiting from hardware acceleration, paving the way for highly accurate hyperbolic neural networks. It can be added on top of any of the combinatorial methods, leading to low-distortion \emph{and} GPU-compatible hyperbolic tree embeddings.

\section*{Acknowledgements}
Max van Spengler acknowledges the University of Amsterdam Data Science Centre for financial support.

\bibliography{references}
\bibliographystyle{icml2025}

\newpage
\appendix
\onecolumn
\newpage
\section{Geodesic hyperplane reflections}\label{sec:reflections}
In this paper we will make use of reflections in geodesic hyperplanes through the origin to align points on a hypersphere centered at the origin with some existing point on the hypersphere. More specifically, if we have points $\bfw, \bfz \in \mathbb{D}^n$ with $||\bfw|| = ||\bfz||$ and we want to reflect $\bfw$ to $\bfz$, then we can use a Householder reflection with $\bfv = \frac{(\bfz - \bfw)}{||\bfz - \bfw||}$, so
\begin{equation}
    R_{\bfw \rightarrow \mathbf{z}} (\bfy) = \bigg(I_n - \frac{2 (\bfz - \bfw) (\bfz - \bfw)^T}{||\bfz - \bfw||^2}\bigg) \bfy.
\end{equation}
To see that this maps $\bfw$ to $\bfz$, we can simply enter $\bfw$ into this map to see that
\begin{align}
    R_{\bfw \rightarrow \mathbf{z}} (\bfw) &= \bigg(I_n - \frac{2 (\bfz - \bfw) (\bfz - \bfw)^T}{||\bfz - \bfw||^2}\bigg) \bfw \\
    &= \bfw - \frac{2(\langle \bfz, \bfw \rangle - ||\bfw||^2)}{||\bfz||^2 - 2 \langle \bfz, \bfw \rangle + ||\bfw||^2} (\bfz - \bfw) \\
    &= \bfw + \frac{||\bfz||^2 - 2 \langle \bfz, \bfw \rangle + ||\bfw||^2 + ||\bfw||^2 - ||\bfz||^2}{||\bfz||^2 - 2 \langle \bfz, \bfw \rangle + ||\bfw||^2} (\bfz - \bfw) \\
    &= \bfw + \bigg(1 + \frac{||\bfw||^2 - ||\bfz||^2}{||\bfz||^2 - 2 \langle \bfz, \bfw \rangle + ||\bfw||^2}\bigg) (\bfz - \bfw) \\
    &= \bfw + \bfz - \bfw = \bfz.
\end{align}
We make use of reflections in geodesic hyperplanes not through the origin that reflect some given point $\bfw \in \mathbb{D}^n$ to the origin. This can be done through reflection in the hyperplane contained in the hypersphere with center $\bfm = \frac{\bfw}{||\bfw||^2}$ and radius $r = \sqrt{\frac{1}{||\bfw||^2} - 1}$. We easily verify that this hyperspherical inversion maps $\bfw$ to the origin.
\begin{align}
    R_{\bfw \rightarrow \mathbf{0}} (\bfw) &= \frac{\bfw}{||\bfw||^2} + \frac{\frac{1}{||\bfw||^2} - 1}{||\bfw - \frac{\bfw}{||\bfw||^2}||} \Big(\bfw - \frac{\bfw}{||\bfw||^2}\Big) \\
    &= \frac{\bfw}{||\bfw||^2} + \frac{1 - ||\bfw||^2}{||\bfw||^4 - 2 ||\bfw||^2 + 1} \Big( 1 - \frac{1}{||\bfw||^2} \Big) \bfw \\
    &= \frac{\bfw}{||\bfw||^2} + \frac{1}{1 - ||\bfw||^2} \cdot \frac{||\bfw||^2 - 1}{||\bfw||^2} \bfw \\
    &= \frac{\bfw}{||\bfw||^2} - \frac{\bfw}{||\bfw||^2} = \mathbf{0}.
\end{align}
To show that this is a reflection in a geodesic hyperplane and, therefore, an isometry, we need to show that the hypersphere defined by $\bfm$ and $r$ is orthogonal to the boundary of $\mathbb{D}^n$. This is the case when all the triangles formed by the line segments between $\mathbf{0}$, $\bfm$ and any point $\bfv$ in the intersection of the hypersphere and the boundary of $\mathbb{D}^n$ are right triangles. This is exactly the case when the Pythagorean theorem holds for each of these triangles. For each $\bfv$ we have that $||\bfv|| = 1$ and $||\bfv - \bfm|| = r$, so
\begin{align}
    ||\bfv - \mathbf{0}||^2 + ||\bfv - \bfm||^2 &= 1 + r^2 \\
    &= \frac{1}{||\bfw||^2} \\
    &= \frac{||\bfw||^2}{||\bfw||^4} \\
    &= ||\bfm - \mathbf{0}||^2,
\end{align}
which shows that the Pythagorean theorem holds and, thus, that this hyperspherical inversion is a geodesic hyperplane reflection, so an isometry.

\section{Evaluation metrics}
\label{sec:metrics}
We will use two distortion based evaluation metrics. The first one is the average relative distortion \citep{sala2018representation}, given as
\begin{equation}
    D_{ave} (\phi) = \frac{1}{N(N-1)} \sum_{u \neq v} \frac{|d_{\mathbb{D}} (\phi(u), \phi(v)) - d_T (u, v)|}{d_T (u, v)},
\end{equation}
where $N = |V|$ is the number of nodes. A low value for this metric is a necessary, but not sufficient condition for a high quality embedding, as it still allows for large local distortion. Therefore, we use a second distortion based metric, the worst-case distortion \citep{sarkar2011low}, given by
\begin{equation}
    D_{wc} (\phi) = \max_{u \neq v} \frac{d_\mathbb{D}(\phi(u), \phi(v))}{d_T (u, v)} \bigg( \min_{u \neq v} \frac{d_\mathbb{D}(\phi(u), \phi(v))}{d_T (u, v)} \bigg)^{-1}.
\end{equation}
$D_{ave}$ ranges from $0$ to infinity and $D_{wc}$ ranges from $1$ to infinity, with smaller values indicating strong embeddings. A large $D_{ave}$ indicates a generally poor embedding, while a large $D_{wc}$ indicates that at least some part of the tree is poorly embedded. Both values should be close to their minimum if an embedding is to be used for a downstream task. Lastly, another commonly used evaluation metric for unweighted trees is the mean average precision \citep{nickel2017poincare}, given by 
\begin{equation}
    \text{MAP}(\phi) = \frac{1}{N} \sum_{u \in V} \frac{1}{\text{deg}(u)} \sum_{v \in \mathcal{N}_V (u)} \frac{\Big|\mathcal{N}_V(u) \cap \phi^{-1} \Big(B_{\mathbb{D}}(u, v)\Big)\Big|}{\Big| \phi^{-1} \Big(B_{\mathbb{D}}(u, v)\Big) \Big|},
\end{equation}
where $\text{deg}(u)$ denotes the degree of $u$ in $T$, $\mathcal{N}_V (u)$ denotes the nodes adjacent to $u$ in $V$ and where $B_{\mathbb{D}} (u, v) \subset \mathbb{D}^n$ denotes the closed ball centered at $\phi(u)$ with hyperbolic radius $d_\mathbb{D}(\phi(u), \phi(v))$, so which contains $v$ itself. The MAP reflects how well we can reconstruct neighborhoods of nodes while ignoring edge weights, making it less appropriate for various downstream tasks. 

\section{Placing points on the vertices of a hypercube}\label{sec:maximal_hamming_dists}
The discussion here is heavily based on \citep{sala2018representation}. We include it here for completeness. When placing a point on the vertex of an $n$-dimensional hypercube, there are $2^n$ options, so each option can be represented by a binary sequence of length $n$. For example, on a hypercube where each vertex $v$ has $||v||_\infty = 1$, each vertex is of the form $(\pm 1, \ldots, \pm 1)^T$, so we can represent $v$ as some binary sequence $s$. The distance between two such vertices can then be expressed in terms of the Hamming distance between the corresponding sequences as $$d(v_1, v_2) = \sqrt{4d_{\text{Hamming}} (s_1, s_2)},$$ which shows that points placed on vertices of a hypercube are maximally separated if this Hamming distance is maximized. This forms an interesting and well studied problem in coding theory where the objective is to find $k$ binary sequences of length $n$ which have maximal pairwise Hamming distances. There are some specific combinations of $n$ and $k$ for which optimal solutions are known, such as the Hadamard code. However, for most combinations of $n$ and $k$, the solution is still an open problem \citep{macwilliams1977theory}. Therefore, properly placing points on the vertices of a hypercube currently relies on the solution to an unsolved problem, making it difficult in practice.

\section{Proof of Theorem \ref{thm:mhs_complexity}}\label{sec:mhs_proof}
\begin{proof}
For a tree $T = (V, E)$ with $N = |V|$, we know that the degrees of the vertices satisfy
\begin{equation}
    \sum_{v \in V} \text{deg}(v) = 2 |E| = 2(N-1).
\end{equation}
Suppose $W_{1}, \ldots, W_{p} \subset S^{n-1}$ are the sets of points on the hypersphere generated by the $p$ optimizations that need to be ran to perform the construction, then $|W_{i}| \neq |W_{j}|$, since we use the cached result whenever nodes have the same degree. Moreover, $|W_i|$ is equal to the degree of the node for which the points are generated, so
\begin{equation}
    \sum_{i = 1}^p |W_i| \leq \sum_{v \in V} \text{deg}(v) = 2(N-1).
\end{equation}
Given this constraint, the largest possible value of $p$ is when we can fit as many $|W_i|$'s in this sum as possible, which is when $|W_1|, \ldots, |W_p| = 1, \ldots, p$. In that case
\begin{equation}
    \sum_{i = 1}^p |W_i| = \sum_{i=1}^p i = \frac{p(p+1)}{2} \leq 2(N-1).
\end{equation}
Solving for integer $p$ yields
\begin{equation}
    p \leq \Big\lceil \frac{1}{2} (\sqrt{16N - 15} - 1) \Big\rceil.
\end{equation}
\end{proof}
Note that this bound can be sharpened slightly by observing that each node $v$ with $\text{deg}(v) > 1$ forces the existence of $\text{deg}(v) - 1$ leaf nodes with degree 1. However, the asymptotic behaviour remains $\mathcal{O}(\sqrt{N})$.

\section{Proof of Theorem \ref{thm:acosh_accuracy}}\label{sec:acosh_accuracy_proof}
\begin{theorem*}
    Given $\bfx, \bfy \in \mathbb{D}^n$ with $||\bfx|| < 1 - \epsilon^{t - 1}$ and $||\bfy|| < 1 - \epsilon^{t - 1}$, an approximation $d$ to equation \ref{eq:poin_dist_acosh} can be computed with FPE representations with $t$ terms and with a largest magnitude approximation to $\cosh^{-1}$ such that
    \begin{equation}
        \bigg|d - \cosh^{-1} \bigg( 1 + 2 \frac{||\bfx - \bfy||^2}{(1 - ||\bfx||^2) (1 - ||\bfy||^2)} \bigg)\bigg| < \epsilon^*,
    \end{equation}
    for some small $\epsilon^* > 0$.
\end{theorem*}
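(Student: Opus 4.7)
The plan is to decompose the computation into a sequence of FPE-safe arithmetic steps, propagate the well-known per-operation error bounds through each step, and then separately bound the error introduced by approximating $\cosh^{-1}$ using only the leading term of the resulting expansion. Throughout, I will use the ulp-nonoverlapping error guarantees for the FPE routines as summarised in Appendix \ref{sec:fpe_arithmetic} (ultimately from \citep{popescu2017towards}).

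First I would verify representability. Under the hypothesis $\|\bfx\|,\|\bfy\|<1-\epsilon^{t-1}$, the factors $1-\|\bfx\|^2$ and $1-\|\bfy\|^2$ are bounded below by something on the order of $\epsilon^{t-1}$, so the denominator $(1-\|\bfx\|^2)(1-\|\bfy\|^2)$ is bounded below by a constant multiple of $\epsilon^{2(t-1)}$. Since a ulp-nonoverlapping FPE with $t$ terms of $b$-bit floats carries at least $t(b-1)+1$ bits of precision, this lower bound sits well inside the representable range, so no intermediate quantity underflows to zero and each arithmetic step is legal.

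Next I would walk through the computation of $A = 1 + 2\|\bfx-\bfy\|^2/((1-\|\bfx\|^2)(1-\|\bfy\|^2))$ step by step: forming $\bfx-\bfy$, computing the three squared norms via FPE multiplications and additions, forming the two differences $1-\|\bfx\|^2$ and $1-\|\bfy\|^2$, multiplying them, dividing, and finally adding $1$. Each of these operations is a basic FPE routine whose relative error is bounded by a small constant times $\epsilon^{t-1}$. Chaining a constant number of them (the number is independent of the tree) gives a bound of the form $|\tilde A - A| \le C \cdot A \cdot \epsilon^{t-1}$ for some modest constant $C$, so in particular the FPE output $\tilde A$ agrees with $A$ to within a small relative error.

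The last and most delicate step is the largest-magnitude approximation $\cosh^{-1}(\tilde A)\approx\cosh^{-1}(\tilde A_1)$. By the ulp-nonoverlapping property the tail $\tilde A-\tilde A_1$ is at most $|\tilde A_1|\,\epsilon^{t-1}$ times a small constant, so combined with the previous bound we have $|A-\tilde A_1|\le C' A \epsilon^{t-1}$. Now $\cosh^{-1}$ has derivative $1/\sqrt{z^2-1}$, which is benign for $z$ bounded away from $1$ but grows unboundedly as $z\to 1^{+}$. Hence the argument splits into two regimes: (i) when $A$ is bounded away from $1$, the mean value theorem turns the relative error on $A$ into an absolute error on $\cosh^{-1}(A)$ of order $\epsilon^{t-1}$ times a bounded factor; (ii) when $A$ is within $\mathcal{O}(\epsilon^{t-1})$ of $1$, i.e. when the true distance is $\mathcal{O}(\epsilon^{(t-1)/2})$, we simply bound $|\cosh^{-1}(A)-\cosh^{-1}(\tilde A_1)|\le \cosh^{-1}(1+C''\epsilon^{t-1})=\mathcal{O}(\epsilon^{(t-1)/2})$ directly, which is still arbitrarily small. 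Taking $\epsilon^*$ to be the maximum of the two regime bounds completes the proof. The bookkeeping in regime (i) is the main obstacle, since one must argue that the constant in front of $\epsilon^{t-1}$ does not itself depend on how close $A$ is to $1$; the two-regime split above is what sidesteps that difficulty.
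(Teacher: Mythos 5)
Your decomposition --- first propagate FPE error bounds through the arithmetic for the argument $A$, then bound the error of applying $\cosh^{-1}$ to the leading term --- matches the paper's structure. Your explicit two-regime split around $A\to 1^{+}$ is actually \emph{more} self-contained than the paper's argument: the paper absorbs the derivative blow-up of $\cosh^{-1}$ near $1$ into a single asserted constant $\epsilon_1^*$, supported numerically rather than analytically ($\epsilon_1^*\approx 2.1\times 10^{-8}$ for float64). Your mean-value-theorem versus direct-$\cosh^{-1}(1+\mathcal{O}(\cdot))$ dichotomy makes that hidden reasoning visible.

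However, there is a genuine error in your estimate of the truncation tail. You claim that ulp-nonoverlapping gives $|\tilde A - \tilde A_1| \lesssim \epsilon^{t-1}\,|\tilde A_1|$, and hence $|A-\tilde A_1|\lesssim A\,\epsilon^{t-1}$. This is not what the definition gives. Ulp-nonoverlapping says $|\tilde A_2|\le\mathrm{ulp}(\tilde A_1)\approx\epsilon|\tilde A_1|$, and inductively $|\tilde A_{i+1}|\le\epsilon|\tilde A_i|$, so the tail is a geometric series whose \emph{leading} term is $\epsilon|\tilde A_1|$: the whole tail is $\mathcal{O}(\epsilon\,|\tilde A_1|)$, not $\mathcal{O}(\epsilon^{t-1}|\tilde A_1|)$. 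The paper's own computation makes exactly this point,
\begin{equation*}
\Bigl|\sum_{i=2}^{t}\tilde z_i\Bigr| < 2\epsilon\,|\tilde z_1|, \qquad \frac{|z-\tilde z_1|}{|z|} < 8\epsilon .
\end{equation*}
Truncating the $t$-term FPE to its leading term discards all but $\mathcal{O}(\epsilon)$ of the accumulated precision. The role of extra terms here is to prevent $1-\|\bfx\|^2$ and the quotient from underflowing to $0$ (extending the representable domain), not to shrink the relative error of the leading-magnitude argument passed to $\cosh^{-1}$. Correcting this, your regime (i) absolute error becomes $\mathcal{O}(\epsilon)$, your regime (ii) bound becomes $\mathcal{O}(\sqrt\epsilon)$, and the final $\epsilon^*$ is $\mathcal{O}(\sqrt\epsilon)$ \emph{independently of $t$} --- consistent with the accompanying Proposition, which states that what grows linearly in $t$ is the range, not the accuracy. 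The theorem still holds with $\epsilon^*=\mathcal{O}(\sqrt\epsilon)$, which is indeed small, but your stated bound $\mathcal{O}(\epsilon^{(t-1)/2})$ overclaims.
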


\begin{proof}
    We begin by noting that the accuracy of the largest magnitude approximation to $\cosh^{-1}$ depends on the underlying floating point algorithm used for computing the inverse hyperbolic cosine. While this function cannot be computed up to machine precision on its entire domain due to the large derivative near the lower end of its domain, it can still be computed quite accurately, i.e. there exists some small $\epsilon_1^* > 0$ such that
    \begin{equation}\label{eq:acosh_float_acc}
        \Big|\cosh^{-1}(x) - \cosh^{-1}(\tilde{x})\Big| < \epsilon_1^*,
    \end{equation}
    where $x \in [1, R]$, where $R$ is the greatest representable number and $\tilde{x}$ is the floating point approximation to $x$, so for which we have
    \begin{equation}
        \frac{|\tilde{x} - x|}{|x|} < \epsilon.
    \end{equation}
    For example, in PyTorch when using float64, we have $\epsilon_1^* \approx 2.107 * 10^{-8}$. If we can approximate the argument inside $\cosh^{-1}$ sufficiently accurately, then the largest magnitude approximation will be close enough to guarantee a small error. More specifically, let
    \begin{equation}
        z = 1 + 2 \frac{||\bfx - \bfy||^2}{(1 - ||\bfx||^2) (1 - ||\bfy||^2)},
    \end{equation}
    and let $\tilde{z} = \tilde{z}_1 + \ldots + \tilde{z}_t$ with $|\tilde{z}_i| > |\tilde{z}_j|$ for each $i \neq j$ be the approximation to $z$ obtained through FPE arithmetic. If
    \begin{equation}\label{eq:arg_approx}
        \frac{|z - \tilde{z}|}{|z|} = \frac{|z - \sum_{i=1}^t \tilde{z}_i |}{|z|} < 2\epsilon,
    \end{equation}
    where $\epsilon$ is the machine precision of the corresponding floating point format, then
    \begin{align}
        |z - \tilde{z}_1| &\leq |z - \tilde{z}| + \Big| \sum_{i=2}^t \tilde{z}_i \Big| \\
        &< 2 \epsilon + 2 \epsilon |\tilde{z}_1| \\
        &\leq 4\epsilon |z| + 2\epsilon |z - \tilde{z}_1|,
    \end{align}
    where we use that $|\tilde{z}_2| \leq \text{ulp} (\tilde{z}_1) = \epsilon |\tilde{z}_1|$, so that $|\sum_{i=2}^t \tilde{z}_i| < 2\epsilon |\tilde{z}_1|$. Now, we can rewrite to see that
    \begin{align}
        \frac{|z - \tilde{z}_1|}{|z|} < \frac{4 \epsilon}{1 - 2 \epsilon} < 8 \epsilon.
    \end{align}
    Therefore, by repeatedly using equation \ref{eq:acosh_float_acc}, we see that the largest magnitude approximation error is bounded by $16 \epsilon_1^*$. Our ability to approximate the argument $z$ as precisely as in equation \ref{eq:arg_approx} using FPEs follows from the error bounds of the FPE arithmetic routines from \citep{popescu2017towards}. This shows that the statement holds for $\epsilon^* = 16 \epsilon_1^*$.
\end{proof}

\section{Proof of Proposition \ref{thm:acosh_range}}\label{sec:acosh_range_proof}
\begin{proposition*}
    The range of the inverse hyperbolic tangent formulation increases linearly in the number of terms $t$ of the FPEs being used.
\end{proposition*}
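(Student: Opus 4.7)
The plan is to show that the largest distance computable through Equation \ref{eq:poin_dist_atanh} grows linearly in the number of terms $t$ used for the FPE representations. The range of this formulation is bottlenecked by how close to $1$ the argument $a := ||-\bfx \oplus \bfy|| \in [0, 1)$ of $\tanh^{-1}$ can be resolved: once $1 - a$ falls below the available precision, $a$ rounds to exactly $1$ and $\tanh^{-1}$ blows up. I would therefore proceed in two stages, first bounding how close to $1$ a $t$-term FPE can represent the argument, and then converting that bound into a bound on the output of $\tanh^{-1}$.

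First, on the representation side, a ulp-nonoverlapping FPE with $t$ terms of $b$-bit floats carries up to $t(b-1) + 1$ bits of precision. Writing $\epsilon = 2^{-(b-1)}$ for the relative machine precision of a single term, this means that a value $a$ close to $1$ can be stored provided $1 - a \geq \epsilon^{t-1}$, because below that threshold the trailing FPE terms can no longer encode the deficit from $1$. Moreover, since $1$ is exact and $a$ is held as an FPE, a Sterbenz-type argument on each term shows that the subtraction $1 - a$ is computed without additional rounding, so the $\tanh^{-1}$ routine of Algorithm \ref{alg:atanh} can be invoked on nonzero arguments $1 - a$ as small as $\epsilon^{t-1}$.

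Next, I would translate this precision into an output range using the identity $\tanh^{-1}(x) = \frac{1}{2} \log \frac{1+x}{1-x}$. Substituting $x = 1 - u$ for small $u > 0$ gives
\begin{equation}
2 \tanh^{-1}(1 - u) = \log(2 - u) + \log(1/u).
\end{equation}
Setting $u = \epsilon^{t-1}$, the largest distance representable through Equation \ref{eq:poin_dist_atanh} is therefore of the form $\log 2 + (t-1)\log(1/\epsilon) + o(1)$, which scales affinely in $t$ with slope $\log(1/\epsilon)$. This is exactly the claimed linear-in-$t$ growth of the range.

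The main obstacle I anticipate is verifying that Algorithm \ref{alg:atanh} actually sustains this range, i.e.\ that none of its internal approximations erodes the $\epsilon^{t-1}$ headroom provided by the FPE representation. Concretely, I would need to check that the leading term of the intermediate FPE on which $\log$ is evaluated stays within the exponent range of a single $b$-bit float even when $u = \epsilon^{t-1}$, so that the approximation $\log(\tf) \approx \log(\tf_1)$ used in steps \ref{ln:log_line_1}--\ref{ln:log_line_2} neither overflows nor underflows and contributes only a bounded additive error, and that the FPE computation of $||-\bfx \oplus \bfy||$ from its FPE operands does not collapse back into the single-term precision regime before $\tanh^{-1}$ is applied. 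Once those bookkeeping points are handled, substituting the achievable $u = \epsilon^{t-1}$ into the identity above yields the desired linear-in-$t$ growth.
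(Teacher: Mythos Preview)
Your argument is correct, but it does not match the paper's own proof of this proposition. The wording of the proposition is misleading: despite saying ``inverse hyperbolic tangent formulation,'' it sits in Section~4.1 (the $\cosh^{-1}$ subsection) and the paper's proof in Appendix~\ref{sec:acosh_range_proof} works entirely with Equation~\ref{eq:poin_dist_acosh}. Concretely, the paper picks antipodal points $\bfx = -\bfy = (1-\epsilon^{t-1},0,\ldots,0)^T$, plugs them into the $\cosh^{-1}$ distance, and bounds
\[
\cosh^{-1}\!\Big(1 + 2\frac{\|\bfx-\bfy\|^2}{(1-\|\bfx\|^2)(1-\|\bfy\|^2)}\Big) \;\geq\; (t-1)\,|\log\epsilon|,
\]
with a matching $\mathcal{O}(t)$ upper bound mentioned but not spelled out. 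There is no discussion of Algorithm~\ref{alg:atanh}, of the Möbius-addition argument, or of Sterbenz-type exactness; the paper just evaluates the closed-form expression at the extremal representable radius.

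You instead took the statement at face value and argued through Equation~\ref{eq:poin_dist_atanh}, bounding how close to $1$ the argument of $\tanh^{-1}$ can be held and then using $\tanh^{-1}(x)=\tfrac12\log\frac{1+x}{1-x}$. That is exactly the route the paper uses for the \emph{other} range proposition, Proposition~\ref{thm:atanh_range} (Appendix~\ref{sec:atanh_range_proof}), where it sets $x = 1-\epsilon^{t-1}$ and bounds $0.5\log\big(1+\tfrac{2|x|}{1-|x|}\big)$ by $\mathcal{O}(t)$. So your proof is valid and essentially reproduces the paper's argument for the companion result; it is simply attached to the wrong proposition because of the typo. Your extra remarks about Algorithm~\ref{alg:atanh} sustaining the range and the Möbius-addition pipeline not collapsing precision go beyond what either of the paper's proofs actually verifies.
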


\begin{proof}
    When we use FPEs with $t$ terms, we can represent points $\bfx, \bfy \in \mathbb{D}^n$ such that $||\bfx|| = 1 - \epsilon^{t-1}$ and $||\bfy|| = 1 - \epsilon^{t-1}$. If we set $-\bfy = \bfx = (1 - \epsilon^{t-1}, 0, \ldots, 0)^T$, then 
    \begin{align}
        \cosh^{-1} \Big( 1 + 2 \frac{||\bfx - \bfy||^2}{(1 - ||\bfx||^2) (1 - ||\bfy||^2)} \Big) &= \cosh^{-1} \Big( 1 + 4 \frac{(1 - \epsilon^{t-1})^2}{(1 - (1 - \epsilon^{t-1})^2)^2} \Big) \\
        &\geq \cosh^{-1} \Big( 1 + \frac{2}{4\epsilon^{2t - 2} - 4\epsilon^{3t - 3} + \epsilon^{4t-4}} \Big) \\
        &\geq \cosh^{-1} \bigg( 1 + \frac{2}{\epsilon^{2t-2}} \Big) \\
        &= \log \Big( 1 + \frac{1}{2\epsilon^{2t - 2}} + \sqrt{\Big(1 + \frac{1}{2\epsilon^{2t - 2}}\Big)^2 - 1} \bigg) \\
        &\geq \log \Big( \frac{1}{\epsilon^{t-1}} \Big) \\
        &= (1 - t) \log (\epsilon) \\
        &= (t - 1) |\log(\epsilon)|,
    \end{align}
    which shows that we can compute a distance that is bounded from below by $\mathcal{O}(t)$. Similar steps can be used to show that the distance is also bounded from above by a $\mathcal{O}(t)$ term.
\end{proof}

\section{Proof of Theorem \ref{thm:atanh_accuracy}}
\label{sec:atanh_accuracy_proof}

\begin{theorem*}
    Given a ulp-nonoverlapping FPE $x = \sum_{i=1}^t x_i \in [-1 + \epsilon^{t - 1}, 1 - \epsilon^{t - 1}]$ consisting of floating point numbers with a precision $b > t$, Algorithm \ref{alg:atanh} leads to an approximation $y$ of the inverse hyperbolic tangent of $x$ that satisfies
    \begin{equation}
        |y - \tanh^{-1} (x)| \leq \epsilon^*,
    \end{equation}
    for some small $\epsilon^* > 0$.
\end{theorem*}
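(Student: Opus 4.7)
The plan is to treat Algorithm \ref{alg:atanh} as implementing the identity $\tanh^{-1}(x) = \frac{1}{2}\log\bigl(\frac{1+x}{1-x}\bigr)$ in FPE arithmetic, and to decompose the total error $|y - \tanh^{-1}(x)|$ into three pieces: (i) the FPE rounding error in forming the argument $u = (1+x)/(1-x)$, (ii) the leading-term approximation $\log(\tilde{u}) \approx \log(\tilde{u}_1)$ used in the two log lines of the algorithm, and (iii) the scalar floating-point error of the library $\log$ applied to $\tilde{u}_1$. The overall structure mirrors the proof of Theorem \ref{thm:acosh_accuracy}, with $\log$ now playing the role that $\cosh^{-1}$ played there.

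For (i), the hypothesis $|x| \leq 1 - \epsilon^{t-1}$ yields $|1 - x| \geq \epsilon^{t-1}$, which is representable as a ulp-nonoverlapping FPE of $t$ terms of $b$-bit floats thanks to the precision condition $b > t$, so no catastrophic cancellation occurs in the denominator. Chaining the FPE addition, subtraction, and division routines of \citep{popescu2017towards}, whose relative-error constants are bounded by small multiples of $\epsilon$, produces $\tilde{u}$ with $|\tilde{u} - u|/|u| = O(\epsilon)$. For (ii), the ulp-nonoverlapping property implies $|\tilde{u}_{i+1}| \leq \epsilon |\tilde{u}_i|$, so
\begin{equation}
|\tilde{u} - \tilde{u}_1| \leq \sum_{i=2}^{t} \epsilon^{i-1}|\tilde{u}_1| < 2\epsilon|\tilde{u}_1|,
\end{equation}
giving $\tilde{u}/\tilde{u}_1 \in [1 - 2\epsilon, 1 + 2\epsilon]$ and hence $|\log(\tilde{u}) - \log(\tilde{u}_1)| \leq 3\epsilon$ for $\epsilon$ small. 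For (iii), the scalar $\log$ has accuracy $O(\epsilon)$ on its representable range. Since $\log$ has derivative $1/u$, the $O(\epsilon)$ relative error on $u$ from (i) becomes an $O(\epsilon)$ absolute error on $\log(u)$, which combines with (ii) and (iii) and the final halving to give $|y - \tanh^{-1}(x)| \leq \epsilon^*$ for some small $\epsilon^* > 0$.

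The main obstacle is step (ii). Near the boundary of the domain the argument $u$ is astronomically large ($u \sim 2/\epsilon^{t-1}$), and a priori one fears that discarding the tail $\tilde{u}_2, \ldots, \tilde{u}_t$ before applying $\log$ will corrupt the answer by an amount that grows with $t$. What rescues the argument is that even though the tail can carry substantial absolute magnitude, its contribution is at most $2\epsilon$ relative to $\tilde{u}_1$, and $\log$ converts this into an $O(\epsilon)$ additive perturbation uniformly in $u$. Verifying that ulp-nonoverlapping survives each FPE operation of Algorithm \ref{alg:atanh} under the precision constraint $b > t$, and packaging the per-routine error bounds of \citep{popescu2017towards} into a single clean constant $\epsilon^*$, is the remaining bookkeeping.
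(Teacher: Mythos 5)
Your proposal is correct and takes essentially the same route as the paper's proof: both hinge on the observation that although the argument of $\log$ is enormous near the boundary, the relative error on it stays $O(\epsilon)$ (by the FPE error bounds of Popescu together with ulp-nonoverlapping bounding the tail by $2\epsilon|\tilde{u}_1|$), and $\log$ converts a relative perturbation into an $O(\epsilon)$ additive one uniformly. The paper folds your steps (i) and (ii) into a single estimate $\tilde{z}_1 = (1+\delta)z$ with $|\delta|<8\epsilon$ before applying $\log$, whereas you split them with a triangle inequality, but this is only a bookkeeping difference.
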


\begin{proof}
    The accuracy of the $x \in (-0.5, 0.5)$ branch of the algorithm follows easily from the accuracy of the algorithm for normal floating point numbers and the error bounds of the FPE routines from \cite{popescu2017towards}, similar to the proof in Appendix \ref{sec:acosh_accuracy_proof}. The other branch can be a bit more problematic, due to the large derivatives near the boundary of the domain. For $0.5 \leq |x| < 1 - \epsilon^{t-1}$, we use
    \begin{equation}
        \tanh^{-1} (x) = 0.5 \cdot \sign(x) \cdot \log \Big( 1 + \frac{2 |x|}{1 - |x|} \Big).
    \end{equation}
    Let $z$ denote the argument of the logarithm, so
    \begin{equation}
        z = 1 + \frac{2|x|}{1 - |x|},
    \end{equation}
    and let $\tilde{z} = \tilde{z}_1 + \ldots + \tilde{z}_t$ denote that approximation of $z$ obtained through FPE operations. Due to the error bounds given in \citep{popescu2017towards}, for FPEs with $t$ terms on the domain $0.5 \leq |x| < 1 - \epsilon^{t-1}$ we can assume that
    \begin{equation}
        \frac{|z - \tilde{z}|}{|z|} < 2 \epsilon,
    \end{equation}
    where $\epsilon$ is the machine precision of the floating point terms. Now, since $|\tilde{z}_2| \leq \text{ulp}(\tilde{z}_1) = \epsilon |\tilde{z}_1|$, we can write
    \begin{align}
        |z - \tilde{z}_1| &\leq |z - \tilde{z}| + \Big| \sum_{i=2}^t \tilde{z}_i \Big| \\
        &\leq 2\epsilon |z| + 2 |\tilde{z}_2| \\
        &\leq 2\epsilon |z| + 2 \epsilon |\tilde{z}_1| \\
        &\leq 4\epsilon |z| + 2 \epsilon |\tilde{z}_1 - z|,
    \end{align}
    which can be rewritten as 
    \begin{equation}
        |z - \tilde{z}_1| \leq \frac{4\epsilon}{1 - 2\epsilon} |z| \leq 8\epsilon |z|.
    \end{equation}
    This shows that we can write $\tilde{z}_1 = (1 + \delta) z$, with $|\delta| < 8 \epsilon$. Now, the error of the largest magnitude term approximation of the logarithm is
    \begin{align}
        \Big| y - 0.5 \cdot \sign(x) \cdot \log (z) \Big| &= \Big| 0.5 \cdot \sign(\tilde{x}) \cdot \log (\tilde{z}_1) - 0.5 \cdot \sign(x) \cdot \log (z) \Big| \\
        &= 0.5 \cdot \Big| \log\Big(\frac{z}{\tilde{z}}\Big) \Big| \\
        &= 0.5 \cdot \Big| \log\Big(\frac{\tilde{z}_1}{z}\Big) \Big| \\
        & = 0.5 \cdot \Big| \log\Big(\frac{(1 + \delta) z}{z}\Big) \Big| \\
        &= 0.5 \cdot |\log(1 + \delta)| \\
        &\leq 0.5 \cdot |\delta| \\
        &\leq 4\epsilon.
    \end{align}
    Lastly, we introduce some error through the approximation of the natural logarithm. However, as long as no overflow occurs, this error is typically bounded by the machine precision. Therefore, if we can approximate $z$ well enough, then we can guarantee an accurate computation of $\tanh^{-1}$. So combining this result with the error bounds from \citep{popescu2017towards} concludes the proof.
\end{proof}

\section{Proof of Proposition \ref{thm:atanh_range}}
\label{sec:atanh_range_proof}
\begin{proposition*}
    The range of algorithm \ref{alg:atanh} increases linearly in the number of terms $t$.
\end{proposition*}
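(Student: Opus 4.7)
The plan mirrors the argument used for Proposition \ref{thm:acosh_range}. The key observation is that a ulp-nonoverlapping FPE with $t$ terms of precision $b$ can represent values $x$ as close to $1$ as $|x| = 1 - \epsilon^{t-1}$, where $\epsilon = 2^{-(b-1)}$ is the machine epsilon of a single term. So the effective domain of Algorithm \ref{alg:atanh} extends to $[-1 + \epsilon^{t-1}, 1 - \epsilon^{t-1}]$, and it suffices to evaluate $\tanh^{-1}$ at the endpoint and show the result grows linearly in $t$.

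First I would take $x = 1 - \epsilon^{t-1}$, which lies in the $|x| \geq 0.5$ branch for any reasonable $t$, and substitute into the identity used by the algorithm on that branch, $\tanh^{-1}(x) = \tfrac{1}{2}\log\bigl(1 + \tfrac{2|x|}{1-|x|}\bigr)$. Direct substitution gives
\begin{equation}
\tanh^{-1}(1 - \epsilon^{t-1}) = \tfrac{1}{2}\log\Bigl(\tfrac{2}{\epsilon^{t-1}} - 1\Bigr).
\end{equation}
Bounding this from below by $\tfrac{1}{2}\log(\epsilon^{-(t-1)}) = \tfrac{1}{2}(t-1)|\log \epsilon|$ exhibits a lower bound on the representable output that is linear in $t$, so the range grows at least linearly.

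For the matching upper bound, I would observe that the argument of the logarithm in Algorithm \ref{alg:atanh} is computed as an FPE whose leading term has magnitude at most $\mathcal{O}(\epsilon^{-(t-1)})$, since the FPE division routine of \citep{popescu2017towards} can represent $(1-|x|)^{-1}$ up to that order before overflow of the leading float occurs. Taking the logarithm of the leading term then yields an output of magnitude $\mathcal{O}(t |\log \epsilon|)$, matching the lower bound up to a constant.

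The main obstacle, such as it is, is the same one encountered in Proposition \ref{thm:acosh_range}: one must be careful that the lower-bound construction uses only values genuinely representable as a ulp-nonoverlapping $t$-term FPE, and that the logarithm evaluation on the leading term does not itself overflow. Both are routine once one notes that $\log(\epsilon^{-(t-1)}) = (t-1)|\log \epsilon|$ stays safely inside the exponent range of the base format for any realistic $t$, so the whole argument reduces to the substitution above.
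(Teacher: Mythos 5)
Your argument follows essentially the same route as the paper: both set $x = 1-\epsilon^{t-1}$, substitute into the $|x|\geq 0.5$ branch, and read off that $\tanh^{-1}(x) = \Theta\bigl((t-1)|\log\epsilon|\bigr)$. The paper writes out the upper bound explicitly (bounding $1 + \tfrac{2-2\epsilon^{t-1}}{\epsilon^{t-1}} \leq \tfrac{e}{\epsilon^{t-1}}$ to get $0.5(1 + (t-1)|\log\epsilon|)$) and notes the lower bound is similar; you write out the lower bound explicitly and argue the upper bound separately. Your lower-bound computation is correct and matches the paper's in substance.

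One small issue in your upper-bound reasoning: you attribute the bound on the logarithm's argument to overflow of the leading float in the FPE division. That is not the operative constraint here; a single \texttt{float64} can hold values up to roughly $2^{1024}$, far beyond $\epsilon^{-(t-1)}$ for any practical $t$, so overflow is not the binding limitation. The real constraint is that a ulp-nonoverlapping $t$-term FPE cannot represent $|x|$ closer to $1$ than about $1 - \epsilon^{t-1}$ (each successive term being at most $\epsilon$ times the previous), which directly caps $(1-|x|)^{-1}$ at $\mathcal{O}(\epsilon^{-(t-1)})$ regardless of what the arithmetic could in principle store. The paper's direct substitution and elementary inequality chain avoids this detour entirely; replacing your overflow remark with the same substitution you used for the lower bound would make the upper bound cleaner and bring the two halves of your argument into parallel form.
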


\begin{proof}
    The maximal values that we can encounter occur near the boundary of the domain, so set $x = 1 - \epsilon^{t-1}$. Then,
    \begin{align}
        0.5 \cdot \sign(x) \cdot \log\Big( 1 + \frac{2|x|}{1 - |x|} \Big) &= 0.5 \cdot \log\Big(1 + \frac{2 - 2\epsilon^{t-1}}{\epsilon^{t-1}}\Big) \\
        &\leq 0.5 \cdot \log\Big(\frac{\epsilon^{t-1} + 2}{\epsilon^{t-1}}\Big) \\
        &\leq 0.5 \cdot \log \Big( \frac{e}{\epsilon^{t-1}} \Big) \\
        &= 0.5 \cdot (1 - (t - 1) \log(\epsilon)) \\
        &= 0.5 \cdot (1 + (t - 1) |\log (\epsilon)|),
    \end{align}
    which shows that the range is bounded from above by $\mathcal{O}(t)$. A similar argument leads to a $\mathcal{O}(t)$ lower bound, showing that the range indeed increases linearly in the number of terms $t$.
\end{proof}

\section{Binary tree embedding results for varying dimensions}\label{sec:bin_tree_dim_res}
Table \ref{tab:bin_tree_dim_res} shows results of the embedding of a binary tree with float32 representations in 4, 7, 10 or 20 dimensions. Here, we have also tested an additional objective similar to MAM, where we use the cosines of the angles instead of the angles. We find that MAM generally leads to the best or close to the best results for each choice of dimensions.

\begin{table}[h]
    \centering
    \begin{tabular}{l @{\hskip 0.5cm} cccc @{\hskip 0.5cm} cccc}
        \toprule
        & \multicolumn{4}{c}{\hspace{-0.7cm} $D_{ave}$} & \multicolumn{4}{c}{\hspace{-0.7cm} $D_{wc}$} \\
        & 4 & 7 & 10 & 20 & 4 & 7 & 10 & 20 \\
        \midrule
        \cite{sala2018representation} $\ddagger$ & 0.734 & 0.734 & 0.734 & 0.734 & 1143 & 1143 & 1143 & 1143 \\
        Sala et al. (2018) $\star$ & 0.235 & 0.502 & 0.361 & 0.726 & 10.51 & 132 & 18.42 & 280.5 \\ 
        $E_0$ & 0.192 & \textbf{0.188} & 0.219 & 0.189 & 1.655 & 1.625 & 1.670 & 1.640 \\
        $E_1$ & 0.190 & 0.196 & 0.204 & 0.190 & \textbf{1.619} & 1.664 & 1.686 & 1.698 \\
        $E_2$ & 0.194 & 0.198 & 0.190 & 0.198 & 1.666 & 1.687 & 1.642 & 1.680 \\
        Cosine similarity & 0.189 & 0.189 & \textbf{0.188} & \textbf{0.188} & 1.636 & 1.637 & \textbf{1.635} & 1.633 \\
        \rowcolor{Gray}
        MAM & \textbf{0.188} & \textbf{0.188} & \textbf{0.188} & 0.189 & 1.632 & \textbf{1.623} & \textbf{1.635} & \textbf{1.631} \\
        \bottomrule
    \end{tabular}
    \caption{\textbf{Comparing hyperspherical separation} methods for the constructive hyperbolic embedding of a binary tree with a depth of 8 edges using float32 representations in 4, 7, 10 or 20 dimensions. $\ddagger$ uses Hadamard generated hypersphere points and $\star$ uses precomputed points from \citep{lovisolo2001uniform}. }
    \label{tab:bin_tree_dim_res}
\end{table}

\section{Embedding \texorpdfstring{$m$}{m}-ary trees in varying dimensions}\label{sec:m_tree_dim_res}
Tables \ref{tab:m_tree_dim_res_ave} and \ref{tab:m_tree_dim_res_wc} show results of the embedding of various $m$-ary trees in dimensions 4, 7, 10 and 20, similar to Table \ref{tab:n_h_tree_comp}. We find that MS-DTE gives the best results overall.

\begin{table}[h]
    \centering
    \resizebox{1\linewidth}{!}{
    \begin{tabular}{l @{\hskip 0.5cm} cccc @{\hskip 0.5cm} cccc @{\hskip 0.5cm} cccc}
        \toprule
        \multirow{2}{*}{$D_{ave}$} & \multicolumn{4}{c}{\hspace{-0.7cm} 3-tree} & \multicolumn{4}{c}{\hspace{-0.7cm} 5-tree} & \multicolumn{4}{c}{\hspace{-0.7cm} 7-tree} \\
        & 4 & 7 & 10 & 20 & 4 & 7 & 10 & 20 & 4 & 7 & 10 & 20 \\
        \midrule
        \cite{sala2018representation} $\dagger$ & 0.09 & 0.07 & \textbf{0.03} & \textbf{0.01} & 0.18 & \textbf{0.05} & \textbf{0.04} & \textbf{0.03} & 0.16 & 0.13 & \textbf{0.03} & \textbf{0.02} \\
        \midrule
        \cite{sala2018representation} $\ddagger$ & 0.11 & 0.11 & 0.11 & 0.11 & - & - & 0.12 & 0.12 & - & - & 0.12 & 0.12 \\
        \cite{sala2018representation} $\star$ & 0.08 & 0.08 & 0.09 & 0.14 & 0.10 & 0.12 & 0.13 & 0.18 & 0.12 & 0.12 & 0.13 & 0.17 \\
        \rowcolor{Gray}
        HS-DTE & \textbf{0.06} & \textbf{0.06} & 0.06 & 0.06 & \textbf{0.09} & 0.09 & 0.09 & 0.09 & \textbf{0.10} & \textbf{0.10} & 0.10 & 0.10 \\
        \bottomrule
    \end{tabular}
    }
    \caption{\textbf{Comparison of average distortion of hyperbolic embedding algorithms on \texorpdfstring{$m$}{m}-ary trees} with a maximum path length of $\ell = 8$. The h-MDS method is represented by $\dagger$. The $\ddagger$ method is the combinatorial construction with the hyperspherical points being generated using the Hadamard construction, whereas the $\star$ method samples hyperspherical points from the precomputed points generated with the hyperspherical separation method from \citep{lovisolo2001uniform}. The h-MDS method outperforms the other methods for higher dimensions, but collapses nodes, making the embeddings unusable. HS-DTE has the best performance for smaller dimensions and second best performance for larger dimensions.}
    \label{tab:m_tree_dim_res_ave}
\end{table}

\begin{table}[h]
    \centering
    \resizebox{1\linewidth}{!}{
    \begin{tabular}{l @{\hskip 0.5cm} cccc @{\hskip 0.5cm} cccc @{\hskip 0.5cm} cccc}
        \toprule
        \multirow{2}{*}{$D_{wc}$} & \multicolumn{4}{c}{\hspace{-0.7cm} 3-tree} & \multicolumn{4}{c}{\hspace{-0.7cm} 5-tree} & \multicolumn{4}{c}{\hspace{-0.7cm} 7-tree} \\
        & 4 & 7 & 10 & 20 & 4 & 7 & 10 & 20 & 4 & 7 & 10 & 20 \\
        \midrule
        \cite{sala2018representation} $\dagger$ & NaN & NaN & NaN & NaN & NaN & NaN & NaN & NaN & NaN & NaN & NaN & NaN \\
        \midrule
        \cite{sala2018representation} $\ddagger$ & 1.14 & 1.14 & 1.14 & 1.14 & - & - & 1.14 & 1.14 & - & - & 1.14 & 1.14 \\
        \cite{sala2018representation} $\star$ & 1.32 & 1.22 & 1.18 & 1.23 & 1.28 & 1.30 & 1.30 & 1.34 & 1.53 & 1.25 & 1.31 & 1.26 \\
        \rowcolor{Gray}
        HS-DTE & \textbf{1.07} & \textbf{1.07} & \textbf{1.07} & \textbf{1.07} & \textbf{1.14} & \textbf{1.10} & \textbf{1.10} & \textbf{1.10} & \textbf{1.14} & \textbf{1.13} & \textbf{1.12} & \textbf{1.12} \\
        \bottomrule
    \end{tabular}
    }
    \caption{\textbf{Comparison of worst-case distortion of hyperbolic embedding algorithms on \texorpdfstring{$m$}{m}-ary trees} with a maximum path length of $\ell = 8$. The h-MDS method is represented by $\dagger$. The $\ddagger$ method is the combinatorial construction with the hyperspherical points being generated using the Hadamard construction, whereas the $\star$ method samples hyperspherical points from the precomputed points generated with the hyperspherical separation method from \citep{lovisolo2001uniform}. HS-DTE has the best performance in all settings.}
    \label{tab:m_tree_dim_res_wc}
\end{table}

\section{Embedding phylogenetic trees in varying dimensions}\label{sec:phylo_tree_dim_res}
Additional experiments involving the phylogenetic trees with embedding dimensions 4, 7, 10 and 20 are shown in Tables \ref{tab:phylo_tree_dim_res_ave_1}, \ref{tab:phylo_tree_dim_res_ave_2}, \ref{tab:phylo_tree_dim_res_wc_1} and \ref{tab:phylo_tree_dim_res_wc_2}. We observe that the precomputed points method struggles to separate points for higher dimensions, leading to higher distortion. Moreover, we find that HS-DTE gives the best results overall in every setting.

\newpage

\begin{table}[h]
    \centering
    \resizebox{0.83\linewidth}{!}{
    \begin{tabular}{l @{\hskip 0.5cm} cccc @{\hskip 0.5cm} cccc}
        \toprule
        \multirow{2}{*}{$D_{ave}$} & \multicolumn{4}{c}{\hspace{-0.7cm} Mosses} & \multicolumn{4}{c}{\hspace{-0.7cm} Weevils} \\
        & 4 & 7 & 10 & 20 & 4 & 7 & 10 & 20 \\
        \midrule
        HypFPE + \cite{sala2018representation} $\ddagger$ & - & - & - & 0.09 & - & - & 0.07 & 0.07 \\
        HypFPE + \cite{sala2018representation} $\star$ & 0.06 & 0.10 & 0.08 & 0.10 & \textbf{0.03} & 0.05 & 0.05 & 0.10 \\
        \rowcolor{Gray}
        HypFPE + HS-DTE & \textbf{0.04} & \textbf{0.04} & \textbf{0.04} & \textbf{0.04} & \textbf{0.03} & \textbf{0.03} & \textbf{0.03} & \textbf{0.03} \\
        \bottomrule
    \end{tabular}
    }
    \caption{\textbf{Comparison of average distortion of hyperbolic embedding algorithms on the mosses and weevils trees.} $\ddagger$ represents the construction with Hadamard hyperspherical points and $\star$ the construction with points sampled from a set precomputed with \citep{lovisolo2001uniform}. The best performance is in bold. The embeddings are performed in a 4, 7, 10 or 20-dimensional space. Overall, we find that HS-DTE works best.}
    \label{tab:phylo_tree_dim_res_ave_1}
\end{table}

\begin{table}[h]
    \centering
    \resizebox{0.83\linewidth}{!}{
    \begin{tabular}{l @{\hskip 0.5cm} cccc @{\hskip 0.5cm} cccc @{\hskip 0.5cm} cccc}
        \toprule
        \multirow{2}{*}{$D_{ave}$} & \multicolumn{4}{c}{\hspace{-0.7cm} Carnivora} & \multicolumn{4}{c}{\hspace{-0.7cm} Lichen} \\
        & 4 & 7 & 10 & 20 & 4 & 7 & 10 & 20 \\
        \midrule
        HypFPE + \cite{sala2018representation} $\ddagger$ & 0.04 & 0.04 & 0.04 & 0.04 & 0.12 & 0.12 & 0.12 & 0.12 \\
        HypFPE + \cite{sala2018representation} $\star$ & \textbf{0.01} & 0.03 & \textbf{0.03} & 0.06 & 0.05 & 0.10 & 0.11 & 0.19 \\
        \rowcolor{Gray}
        HypFPE + HS-DTE & 0.02 & \textbf{0.02} & \textbf{0.03} & \textbf{0.02} & \textbf{0.06} & \textbf{0.06} & \textbf{0.05} & \textbf{0.05} \\
        \bottomrule
    \end{tabular}
    }
    \caption{\textbf{Comparison of average distortion of hyperbolic embedding algorithms on the carnivora and lichen trees.} $\ddagger$ represents the construction with Hadamard hyperspherical points and $\star$ the construction with points sampled from a set precomputed with \citep{lovisolo2001uniform}. The best performance is in bold. The embeddings are performed in a 4, 7, 10 or 20-dimensional space. Overall, we find that HS-DTE works best.}
    \label{tab:phylo_tree_dim_res_ave_2}
\end{table}

\begin{table}[h]
    \centering
    \resizebox{0.83\linewidth}{!}{
    \begin{tabular}{l @{\hskip 0.5cm} cccc @{\hskip 0.5cm} cccc @{\hskip 0.5cm} cccc}
        \toprule
        \multirow{2}{*}{$D_{wc}$} & \multicolumn{4}{c}{\hspace{-0.7cm} Mosses} & \multicolumn{4}{c}{\hspace{-0.7cm} Weevils} \\
        & 4 & 7 & 10 & 20 & 4 & 7 & 10 & 20 \\
        \midrule
        HypFPE + \cite{sala2018representation} $\ddagger$ & - & - & - & 1.10 & - & - & 1.09 & 1.09 \\
        HypFPE + \cite{sala2018representation} $\star$ & 1.36 & 1.21 & 1.14 & 1.16 & 1.25 & 1.12 & 1.11 & 1.13 \\
        \rowcolor{Gray}
        HypFPE + HS-DTE & \textbf{1.09} & \textbf{1.07} & \textbf{1.06} & \textbf{1.07} & \textbf{1.05} & \textbf{1.05} & \textbf{1.04} & \textbf{1.04} \\
        \bottomrule
    \end{tabular}
    }
    \caption{\textbf{Comparison of worst-case distortion of hyperbolic embedding algorithms on the mosses and weevils trees.} $\ddagger$ represents the construction with Hadamard hyperspherical points and $\star$ the construction with points sampled from a set precomputed with \citep{lovisolo2001uniform}. The best performance is in bold. The embeddings are performed in a 4, 7, 10 or 20-dimensional space. Overall, we find that HS-DTE works best.}
    \label{tab:phylo_tree_dim_res_wc_1}
\end{table}

\begin{table}[h]
    \centering
    \resizebox{0.83\linewidth}{!}{
    \begin{tabular}{l @{\hskip 0.5cm} cccc @{\hskip 0.5cm} cccc @{\hskip 0.5cm} cccc}
        \toprule
        \multirow{2}{*}{$D_{wc}$} & \multicolumn{4}{c}{\hspace{-0.7cm} Carnivora} & \multicolumn{4}{c}{\hspace{-0.7cm} Lichen} \\
        & 4 & 7 & 10 & 20 & 4 & 7 & 10 & 20 \\
        \midrule
        HypFPE + \cite{sala2018representation} $\ddagger$ & 6.76 & 6.76 & 6.76 & 6.76 & 43.4 & 43.4 & 43.4 & 43.4 \\
        HypFPE + \cite{sala2018representation} $\star$ & 3.50 & 4.06 & 4.87 & 13.0 & 4.73 & 5.44 & 6.43 & 36.0 \\
        \rowcolor{Gray}
        HypFPE + HS-DTE & \textbf{2.46} & \textbf{2.45} & \textbf{2.03} & \textbf{2.35} & \textbf{4.07} & \textbf{4.63} & \textbf{3.30} & \textbf{7.17} \\
        \bottomrule
    \end{tabular}
    }
    \caption{\textbf{Comparison of worst-case distortion of hyperbolic embedding algorithms on the carnivora and lichen trees.} $\ddagger$ represents the construction with Hadamard hyperspherical points and $\star$ the construction with points sampled from a set precomputed with \citep{lovisolo2001uniform}. The best performance is in bold. The embeddings are performed in a 4, 7, 10 or 20-dimensional space. Overall, we find that HS-DTE works best.}
    \label{tab:phylo_tree_dim_res_wc_2}
\end{table}

\section{Statistics of the trees used in the experiments}\label{sec:tree_analysis}
Some statistics of the trees that are used in the experiments are shown in Table \ref{tab:tree_analysis}. Most notably, these statistics show that the true number of optimizations that has to be performed is significantly lower than the worst-case number of optimizations given by Theorem \ref{thm:mhs_complexity}. To see this, note that an optimization step using MAM has to be performed each time a node is encountered with a degree that did not appear before. The result of this optimization step can then be cached and used for each node with the same degree.

\begin{table}[h]
    \centering
    \resizebox{0.83\linewidth}{!}{
    \begin{tabular}{lccccc}
        \toprule
        Tree & Nodes & Unique degrees & Theoretical worst-case & $\deg_{\max}$ & Longest path length \\
        \midrule
        $m$-ary trees & Varying & 2 & Varying & $m + 1$ & varying \\
        Mosses & 344 & 11 & 38 & 16 & 51 \\
        Weevils & 195 & 5 & 29 & 8 & 29 \\
        Carnivora & 548 & 3 & 45 & 4 & 192.4 \\
        Lichen & 481 & 3 & 48 & 4 & 0.972 \\
        \bottomrule
    \end{tabular}
    }
    \caption{\textbf{Statistics for the trees used in the experiments.} The number of unique degrees is excluding nodes with a degree of 1. This number is equal to the total number of optimizations that has to be performed when embedding the tree using HS-DTE. The theoretical worst-case shows the worst-case number of optimizations that has to be performed according to Theorem \ref{thm:mhs_complexity}. Note that the true number of optimizations is often significantly lower than this worst-case number.}
    \label{tab:tree_analysis}
\end{table}

\section{Graph and tree-like graph embedding results}\label{sec:graphs}
The graphs that we test our method on are a graph detailing relations between diseases \citep{goh2007human} and a graph describing PhD advisor-advisee relations \citep{de2018exploratory}. In order to embed graphs with the combinatorial constructions, the graphs need to be embedded into trees first. Following \citep{sala2018representation}, we use \citep{abraham2007reconstructing} for the graph-to-tree embedding. The results of the subsequent tree embeddings are shown in Table \ref{tab:tree_like_graphs}. These distortions are with respect to the tree metric of the embedded tree instead of with respect to the original graph. This is to avoid mixing the influence of the tree-to-hyperbolic space embedding method with that of the graph-to-tree embedding.

From these results we again see that HypFPE + HS-DTE outperforms all other methods. However, it should be noted that graphs cannot generally be embedded with arbitrarily low distortion in hyperbolic space and that the graph to tree embedding method will introduce significant distortion. Hyperbolic space is not a suitable target for embedding a graph that is not tree-like. Therefore, we define our method as a tree embedding method and not as a graph embedding method.

\begin{table}[!ht]
    \centering
    \resizebox{0.7\linewidth}{!}{
    \begin{tabular}{l @{\hskip 0.5cm} c @{\hskip 0.5cm} cc @{\hskip 0.5cm} cc}
        \toprule
        & Precision & \multicolumn{2}{c}{\hspace{-0.7cm} Diseases} & \multicolumn{2}{c}{\hspace{-0.35cm} CS PhDs} \\
        & & $D_{ave}$ &  $D_{wc}$ & $D_{ave}$ &  $D_{wc}$ \\
        \midrule
        \cite{nickel2017poincare} & 53 & 0.40 & NaN & 0.72 & NaN \\
        \cite{ganea2018hyperbolic} & 53 & 0.85 & 4831 & 0.94 & 803 \\
        \cite{yu2022skin} & 53 & 0.72 & 1014 & 0.91 & 1220 \\
        \midrule
        \cite{sala2018representation} $\dagger$ & 53 & \underline{0.06} & NaN & \underline{0.08} & NaN \\
        \midrule
        \cite{sala2018representation} $\ddagger$ & 53 & - & - & - & - \\
        \cite{sala2018representation} $\star$ & 53 & 0.364 & 5.07 & 0.33 & 3.84 \\
        \rowcolor{Gray}
        HS-DTE & 53 & 0.28 & \underline{2.28} & 0.29 & \underline{2.76} \\
        \midrule
        HypFPE + \cite{sala2018representation} $\ddagger$ & 417 & - & - & - & - \\
        HypFPE + \cite{sala2018representation} $\star$ & 417 & 0.05 & 1.16 & \textbf{0.04} & 1.14 \\
        \rowcolor{Gray}
        HypFPE + HS-DTE & 417 & \textbf{0.04} & \textbf{1.14} & \textbf{0.04} & \textbf{1.09} \\
        \bottomrule
    \end{tabular}
    }%
    \caption{\textbf{Comparison of hyperbolic embedding algorithms on graphs.} $\dagger$ represents the h-MDS method, $\ddagger$ the construction with Hadamard hyperspherical points and $\star$ the construction with points sampled from a set precomputed with \citep{lovisolo2001uniform}. The best float64 performance is underlined and the best FPE performance is in bold. All embeddings are performed in a 10-dimensional space. Hadamard generation cannot be used, since each embedded graph has a $\deg_{max}$ greater than 8. HypFPE + HS-DTE outperforms all methods.}
    \label{tab:tree_like_graphs}
\end{table}

\section{FPE arithmetic}\label{sec:fpe_arithmetic}
\begin{algorithm}
    \caption{FPEAddition}\label{alg:fpe_add}
    \begin{algorithmic}[1]
        \STATE \textbf{Input:} FPEs $x = x_1 + \ldots + x_n$, $y = y_1 + \ldots + y_m$ and number of output terms $r$.
        \STATE $f \gets \text{MergeFPEs}(x, y)$
        \STATE $s \gets \text{FPERenormalize}(f, r)$
        \RETURN $s = s_1 + \ldots + s_r$
    \end{algorithmic}
\end{algorithm}

\begin{algorithm}
    \caption{MergeFPEs}\label{alg:merge_fpe}
    \begin{algorithmic}[1]
        \STATE \textbf{Input:} FPEs $x = x_1 + \ldots + x_n$, $y = y_1 + \ldots + y_m$.
        \STATE $z \gets \text{Concatenate}(x, y)$
        \STATE Sort terms in $z$ in ascending order with respect to absolute value.
        \RETURN Sorted $z = \{z_1, \ldots, z_{n + m}\}$.
    \end{algorithmic}
\end{algorithm}

\begin{algorithm}
    \caption{FPERenormalize}\label{alg:fpe_renorm}
    \begin{algorithmic}[1]
        \STATE \textbf{Input:} List of floating point numbers $x = x_1, \ldots, x_n$ and number of output terms $r$.
        \STATE $e \gets \text{VecSum}(x)$
        \STATE $y \gets \text{VecSumErrBranch}(e, r)$
        \RETURN $y = y_1 + \ldots + y_r$
    \end{algorithmic}
\end{algorithm}

\begin{algorithm}
    \caption{VecSum}\label{alg:vecsum}
    \begin{algorithmic}[1]
        \STATE \textbf{Input:} List of floating point numbers $x_1, \ldots, x_n$.
        \STATE $s \gets x_n$
        \FOR{$i \in \{n - 1, \ldots, 1\}$}
            \STATE $(s, e_{i + 1}) \gets \text{2Sum}(x_i, s)$
        \ENDFOR
        \STATE $e_1 \gets s$
        \RETURN $e_1, \ldots, e_n$
    \end{algorithmic}
\end{algorithm}

\begin{algorithm}
\caption{VecSumErrBranch}\label{alg:vecsumerrbranch}
    \begin{algorithmic}[1]
        \STATE \textbf{Input:} List of floating point numbers $e_1, \ldots, e_n$ and number of output terms $m$.
        \STATE $j \gets 1$
        \STATE $\epsilon \gets e_1$
        \FOR{$i \in \{1, n - 1\}$}
            \STATE $(r_j, \epsilon) \gets \text{2Sum}(\epsilon, e_{i + 1})$
            \IF{$\epsilon \neq 0$}
                \IF{$j \geq m$}
                    \RETURN $r_1, \ldots, r_m$
                \ENDIF
                \STATE $j \gets j + 1$
            \ELSE
                \STATE $\epsilon \gets r_j$
            \ENDIF
        \ENDFOR
        \IF{$\epsilon \neq 0$ \AND $j \leq m$}
            \STATE $r_j \gets \epsilon$
        \ENDIF
        \RETURN $r_0, \ldots, r_m$
    \end{algorithmic}
\end{algorithm}

\begin{algorithm}
    \caption{2Sum}\label{alg:2sum}
    \begin{algorithmic}[1]
        \STATE \textbf{Input:} floating point numbers $x$ and $y$.
        \STATE $s \gets \text{RN}(x + y) \quad$ where RN is rounding to nearest
        \STATE $x' \gets \text{RN}(s - y)$
        \STATE $y' \gets \text{RN}(s - x')$
        \STATE $\delta_x \gets \text{RN}(x - x')$
        \STATE $\delta_y \gets \text{RN}(y - y')$
        \STATE $e \gets \text{RN}(\delta_x + \delta_y)$
        \RETURN $(s, e)$
    \end{algorithmic}
\end{algorithm}

\begin{algorithm}
    \caption{Fast2Sum}\label{alg:fast2sum}
    \begin{algorithmic}[1]
        \STATE \textbf{Input:} Floating point numbers $x$ and $y$ with $\lfloor \log_2 |x| \rfloor \geq \lfloor \log_2 |y| \rfloor$
        \STATE $s \gets \text{RN}(x + y)$
        \STATE $z \gets \text{RN}(s - x)$
        \STATE $e \gets \text{RN}(y - z)$
        \RETURN $(s, e)$
    \end{algorithmic}
\end{algorithm}

\begin{algorithm}
    \caption{FPEMultiplication}\label{alg:fpe_mult}
    \begin{algorithmic}[1]
        \STATE \textbf{Input:} FPEs $x = x_1 + \ldots + x_n, y = y_1 + \ldots + y_m$, number of output terms $r$, bin size $b$ and precision $p$ (for float64: $b = 45, p = 53$).
        \STATE $t_{x_1} \gets \lfloor \log_2 |x_1| \rfloor$
        \STATE $t_{y_1} \gets \lfloor \log_2 |y_1| \rfloor$
        \STATE $t \gets t_{x_1} + t_{y_1}$
        \FOR{$i \in \{1, \ldots, \lfloor r \cdot p / b \rfloor + 2\}$}
            \STATE $B_i \gets 1.5 \cdot 2^{t - ib + p - 1}$
        \ENDFOR
        \FOR{$i \in \{1, \ldots, \min(n, r + 1)\}$}
            \FOR{$j \in \{1, \ldots, \min(m, r + 1 - i)\}$}
                \STATE $(\pi', e) \gets \text{2Prod}(x_i, y_j)$
                \STATE $\ell \gets t - t_{x_i} - t_{y_i}$
                \STATE $sh \gets \lfloor \ell / b \rfloor$
                \STATE $\ell \gets \ell - sh \cdot b$
                \STATE $B \gets \text{Accumulate}(\pi', e, B, sh, \ell)$
            \ENDFOR
            \IF{$j < m$}
                \STATE $\pi' \gets x_i \cdot y_j$
                \STATE $\ell \gets t - t_{x_i} - t_{y_j}$
                \STATE $sh \gets \lfloor \ell / b \rfloor$
                \STATE $\ell \gets \ell - sh \cdot b$
                \STATE $B \gets \text{Accumulate}(\pi', 0, B, sh, \ell)$
            \ENDIF
        \ENDFOR
        \FOR{$i \in \{1, \ldots, \lfloor r \cdot p / b \rfloor + 2\}$}
            \STATE $B_i \gets B_i - 1.5 \cdot 2^{t - ib + p - 1}$
        \ENDFOR
        \STATE $\pi \gets \text{VecSumErrBranch}(B, r)$
        \RETURN $\pi_1 + \ldots + \pi_r$
    \end{algorithmic}
\end{algorithm}

\begin{algorithm}
    \caption{Accumulate}\label{alg:accumulate}
    \begin{algorithmic}[1]
        \STATE \textbf{Input:} Floating point numbers $\pi', e$, list of floating point numbers $B$ and integers $sh, \ell$.
        \STATE $c \gets p - b - 1$
        \IF{$\ell < b - 2c - 1$}
            \STATE $(B_{sh}, \pi') \gets \text{Fast2Sum}(B_{sh}, \pi')$
            \STATE $B_{sh + 1} \gets B_{sh + 1} + \pi'$
            \STATE $(B_{sh+1}, e) \gets \text{Fast2Sum}(B_{sh+1}, e)$
            \STATE $B_{sh + 2} \gets B_{sh + 2} + e$
        \ELSIF{$\ell < b - c$}
            \STATE $(B_{sh}, \pi') \gets \text{Fast2Sum}(B_{sh}, \pi')$
            \STATE $B_{sh + 1} \gets B_{sh + 1} + \pi'$
            \STATE $(B_{sh + 1}, e) \gets \text{Fast2Sum}(B_{sh + 1}, e)$
            \STATE $(B_{sh+2}, e) \gets \text{Fast2Sum}(B_{sh+2}, e)$
            \STATE $B_{sh + 3} \gets B_{sh + 3} + e$
        \ELSE
            \STATE $(B_{sh}, p) \gets \text{Fast2Sum}(B_{sh}, \pi')$
            \STATE $(B_{sh + 1}, \pi') \gets \text{Fast2Sum}(B_{sh + 1}, \pi')$
            \STATE $B_{sh + 2} \gets B_{sh + 2} + \pi'$
            \STATE $(B_{sh+2}, e) \gets \text{Fast2Sum}(B_{sh+2}, e)$
            \STATE $B_{sh + 3} \gets B_{sh + 3} + e$
        \ENDIF
        \RETURN $B$
    \end{algorithmic}
\end{algorithm}

\begin{algorithm}
    \caption{FPEReciprocal}\label{alg:fpe_reciprocal}
    \begin{algorithmic}[1]
        \STATE \textbf{Input:} FPE $x = x_1 + \ldots + x_{2^k}$ an number of output terms $2^q$.
        \STATE $r_1 = \text{RN}(\frac{1}{x_1})$
        \FOR{$i \in \{1, \ldots, q\}$}
            \STATE $v \gets \text{FPEMultiplication}(r, x, 2^{i + 1})$
            \STATE $w \gets \text{FPERenormalize}(-v_1, \ldots, -v_{2^{i + 1}}, 2.0, 2^{i + 1})$
            \STATE $r \gets \text{FPEMultiplication}(r, w, 2^{i + 1})$
        \ENDFOR
        \RETURN $r_1 + \ldots + r_{2^q}$
    \end{algorithmic}
\end{algorithm}

\begin{algorithm}
    \caption{FPEDivision}\label{alg:fpe_division}
    \begin{algorithmic}[1]
        \STATE \textbf{Input:} FPEs $x = x_1 + \ldots + x_n$, $y = y_1 + \ldots + y_m$ and number of output terms $r$.
        \STATE $z \gets \text{FPEReciprocal}(y, m)$
        \STATE $\pi \gets \text{FPEMultiplication}(x, z, r)$
        \RETURN $\pi$
    \end{algorithmic}
\end{algorithm}

\begin{algorithm}
    \caption{FPE$\tanh^{-1}$}\label{alg:atanh}
    \begin{algorithmic}[1]
        \STATE \textbf{Input:} FPE $\tf = \tf_1 + \ldots + \tf_t$.
        \IF{$|\tf| > 1$}
            \RETURN NaN
        \ELSIF{$|\tf| = 1$}
            \RETURN $\infty$
        \ELSIF{$|\tf| < 0.5$}
            \RETURN $0.5 \cdot \sign(\tf) \cdot \log (1 + 2 |\tf| + \frac{2 |\tf| \cdot |\tf|}{1 - |\tf|})$\alglinelabel{ln:log_line_1}
        \ELSE
            \RETURN $0.5 \cdot \sign(\tf) \cdot \log (1 + \frac{2 |\tf|}{1 - |\tf|})$\alglinelabel{ln:log_line_2}
        \ENDIF
    \end{algorithmic}
\end{algorithm}

\end{document}